



 \documentclass[final,3p,times]{elsarticle}


\usepackage{amssymb}

\usepackage[ruled,vlined]{algorithm2e}
\usepackage{amssymb,amsfonts,amsmath}
\usepackage{enumerate}
\usepackage{proof}
\usepackage{color}
\usepackage{amsthm}

\usepackage[config]{subfig}

\usepackage{amssymb,amsfonts,amsmath}
\usepackage{enumerate}
\usepackage{subfig}
\usepackage{float}
\usepackage{caption}

\usepackage{hyperref}
\newtheorem{Theorem}{Theorem}

\newtheorem{Lemma}{Lemma}

\newtheorem{Proposition}{Proposition}

\newcommand{\field}[1]{\mathbf{#1}}

\newcommand{\bb}{\field{b}}
\newcommand{\ba}{\field{a}}

\newcommand{\Z}{\field{Z}}

\newcommand{\C}{\field{C}}
\newcommand{\bc}{\field{c}}
\newcommand{\B}{\field{B}}

\newcommand{\X}{\field{X}}
\newcommand{\x}{\field{x}}
\newcommand{\p}{\field{p}}
\newcommand{\s}{\field{s}}
\newcommand{\Sb}{\field{S}}
\newcommand{\Y}{\field{Y}}
\newcommand{\y}{\field{y}}
\newcommand{\w}{\field{w}}
\newcommand{\I}{\field{I}}
\newcommand{\m}{\field{m}}
\newcommand{\M}{\field{M}}
\newcommand{\V}{\field{V}}
\newcommand{\D}{\field{D}}
\newcommand{\W}{\field{W}}

\newcommand{\A}{\field{A}}

\newcommand{\thetab}{\boldsymbol\theta}
\newcommand{\hthetab}{\boldsymbol{\hat{\theta}}}


\journal{Neurocomputing}

\begin{document}

\begin{frontmatter}



\title{Feature ranking for multi-label classification using Markov Networks}


\author{Pawe{\l} Teisseyre}

\address{Institute of Computer Science, Polish Academy of Sciences \\
              Jana Kazimierza 5 01-248 Warsaw, Poland}
\ead{teisseyrep@ipipan.waw.pl}
\ead[url]{http://www.ipipan.eu/~teisseyrep/}
\begin{abstract}
We propose a simple and efficient method for ranking features in multi-label classification.
The method produces a ranking of features showing their relevance in predicting labels, which in turn allows to choose a final subset of features.
 The procedure is based on Markov Networks and allows to model the dependencies between labels and  features in a direct way. In the first step we build a simple network using only labels and then we test how much adding a single feature affects the initial network. More specifically, in the first step we use the Ising model whereas the second step is based on the score statistic, which allows to test a significance of added features very quickly. The proposed approach does not require transformation of label space, gives interpretable results and allows for attractive visualization of dependency structure.
We give a theoretical justification of the procedure by
discussing some theoretical properties of the Ising model and the score statistic.
We also discuss feature ranking procedure based on fitting Ising model using $l_1$ regularized logistic regressions. Numerical experiments show that the proposed methods outperform the conventional approaches on the considered artificial and real datasets. 

\end{abstract}

\begin{keyword}
feature selection \sep multi-label learning \sep Markov networks \sep Ising model



\end{keyword}

\end{frontmatter}


\section{Introduction}
Multi-label classification (MLC) has recently attracted a significant attention, motivated by an increasing number of applications. Examples include
text categorization \citep{SchapireSinger2000, Katakisal2001, Nguyen2005, LozaFurnkranz2008, Rubinetal2012},
image classification \citep{Wang2008, Shottonetal2009, Kumaretal2009},
video classification \citep{Boutelletal2004, Wang2011},
music categorization \citep{Trohidisetal2008},
gene and protein function prediction \citep{Elisseeff2001, Diplarisetal2005, Barutcuoglu2006},
medical diagnosis \citep{Lappenschaaral2005, Abbas2013},
chemical analysis \citep{KawaiTakahashi2009, Mammadovetal2007},
social network mining \citep{TangLiu2009, Peters2010} and
direct marketing \citep{Zhangetal2006}.
More examples can be found in \cite{Gibaja2015}, \cite{TsoumakasandKatakis2007}  and \cite{Dembczynskietal2012}. 
The key problem in multi-label learning is how to utilize label dependencies to improve the classification performance, motivated by which number of multi-label algorithms have been proposed in recent years (see \cite{Madjarov2012} for extensive comparison of several methods). The recent progress in MLC is summarized in \cite{Zhang2013} and \cite{Gibaja2015}.
In MLC, each object of our interest (e.g. text, image, patient, etc.) is described by a vector of $p$ features $\x=(x_1,\ldots,x_p)^{T}$ and a vector of $K$ binary labels $\y=(y_1,\ldots,y_K)^{T}$. The main objective is to build a model (using some training examples) which predicts $\y$ based on $\x$. 


One of the trending challenges in MLC is a dimensionality reduction of the feature space \citep{Gibaja2015}, i.e. reducing the dimensionality of the vector $\x$. Usually only some features affect $\y$.
The issue is very important as 
in practical applications, the dimensionality of feature space can be very large. For example in text categorization a standard approach is to use so-called \textit{bag-of-words model} in which frequencies of occurrence of words in a corpora are taken as features. This method generates thousands of features. Moreover, one can also take into account higher degree n-grams (bigrams, trigrams, etc.) and many other types of features (e.g. stylistic features like averaged word length), which further increases the dimensionality of feature vector.
Elimination of redundant features is essential for the following reasons.
First, it allows to reduce the computational burden of MLC procedures. 
Secondly, it improves a prediction accuracy of MLC methods. 
Fitting many MLC models includes estimation of large number of parameters.
It is well known that fitting models with many spurious features increases the variance of estimators and thus decreases the prediction accuracy of the model (see e.g. chapter 7  in \cite{Hastieetal2009}).    
Finally, feature selection methods are used to discover  dependency structure in data. This allows to understand how features affect the labels, which is particularly important in biological and medical applications. For example, in multi-morbidity (co-occurrence of two or more chronic medical conditions in one person) it is crucial to discover which characteristics of the patient influence the co-occurrence of diseases \citep{Bromurietal2014}.
Moreover, it would be interesting to know which diseases are likely to occur simultaneously given some characteristics of the patient (for example age, gender and previous diseases). We discuss different approaches of dimensionality reduction in MLC in Section \ref{Related work}.

In this paper we focus on Feature Ranking (FR) methods (sometimes also called filters). 
Although the MLC attracted a significant attention in machine learning community, only a few works address the feature ranking problem in multi-label setting. 
Feature ranking (FR) methods are mainly used to assess the individual relevance of available features. More precisely, they allow to order features with respect to their relevance in predicting labels, which in turn allows to remove the least significant features and build a final classification model using the most significant features. 
Although usually in this approach neither the possible redundancy between features nor their joint relevancy is   taken into account, the main advantage is a low computational cost, which allows to compute the importance of thousands of features  relatively fast.
This is crucial in many domains, like text categorization or functional genomics. 
Moreover, in some applications it is important to evaluate the individual relevance of features, not only their joint relevance.
Some authors use FR methods as an initial step to filter out spurious features and then use more sophisticated selection methods on the remaining set of features (see e.g. Sure Independence Screening procedure proposed by \cite{FanLv2008}). 
We also discuss FR method, which incorporates all features simultaneously.

The FR task in multi-label setting is much more challenging than in a single-label case. In traditional classification with only one target variable, FR methods aim to model the dependence between target variable $y$ and a single feature $x_j$ using different variable importance measures. Then the procedure is repeated for all possible features. The most popular measures are: information gain (\cite{Pengetal2005}), the chi-squared statistic and simple statistics based on univariate logistic regression (\cite{Fanetal2009}), among others.     
On the other hand, in MLC feature $x_j$ may affect targets $y_1,\ldots,y_K$ in different ways. First, it may happen that $x_j$ influences only some of labels, while others are independent from $x_j$. More importantly, since in MLC methods dependencies between labels are usually considered, we should verify how $x_j$ affects a given label $y_k$, in a presence of the remaining labels. It may happen that $x_j$ is independent from $y_k$, while $x_j$ becomes dependent on $y_k$, when conditioned on other labels. Finally, feature $x_j$ can influence only the interactions between labels, while the marginal dependencies are not present.
Examples of such situations are provided in Sections \ref{Ising model with constant interaction terms} and \ref{Ising model with feature-dependent interaction terms}.
 A desirable FR method should take into account all the above aspects.

The main limitation of recent FR methods is that they require problem transformation methods: Binary Relevance (BR) or Label Powerset (LP) transformation  for evaluating the relevance of given features. Unfortunately, both  transformations suffer from many serious drawbacks, discussed in more detail in Section \ref{Related work}. To propose a desirable FR method, we make an effort to take into account  the following aspects.
\begin{itemize}
\item The method should not use BR or LP transformation.
\item The method should take into account specificity of multi-label setting, i.e. it should measure the dependence between feature $x_j$ and label $y_k$, given the remaining labels.
\item The method should give interpretable results to see which labels (or interactions between labels) and how are influenced by feature.
\item The computational cost of the procedure should be low.
\end{itemize}
To take into account the above postulates, we propose a novel approach which is based on Markov Networks. Markov Network (see e.g. \cite{Bishop2006}, Section 8) can be represented as a graph, with node set representing random variables (in our case labels and features) and edge set representing dependencies between variables.
Existing edge between two variables means that they are conditionally dependent given the rest of the graph.
The main advantage of Markov Networks is that they allow to model the pairwise dependencies between labels and features in a direct way.
Although, Markov Networks have already been applied in MLC (see e.g. \cite{Chengetal2014} or \cite{Bianetal2012}), they have not been used as a feature ranking method.
Our approach is based on the following idea. We initially build a Markov Network containing only labels, which allows to model the dependencies among the labels. In the second step, we test how much adding a single feature $x_j$ affects the initial network. This allows to test the dependence strength between a given feature $x_j$ and a given label $y_k$, conditioning on the remaining labels. The procedure is repeated for all available features, which yields the final ranking. 
Specifically, in our method we use the Ising model (\cite{Ising1925}, \cite{Lenz1920}) which is a simple example of Markov Network. It turns out that for the Ising model, building an initial network containing only labels can be done relatively simply, especially for moderate number of labels.
Please see Section \ref{Why Ising model} for deeper justification of using the Ising model.
 In a second step we propose to use the score statistic \citep{Rao1948}, which is very computationally efficient in this case. Namely, it is not necessary to refit an initial network when we add feature $x_j$. This allows to test a significance of added features very quickly which is crucial in FR methods. The details of the procedure are given in next Sections. 
Figure \ref{fig001} shows networks corresponding to the most and the least significant features for \texttt{scene} dataset, in which the task is to predict six labels (beach, sunset, field, fall, mountain, and urban). Numbers over edges $u_1,\ldots,u_6$ are the score statistics which reflect the conditional dependences between feature $x_j$ and labels (given the remaining labels). 
The higher the value of the score statistic, the larger is the influence of $x_j$ on the given label, in the presence of remaining labels. The score statistics for a given feature $x_j$ are added together, which gives an importance measure for $x_j$. The final ranking is based on these importances.
We also discuss FR procedure based on fitting Ising model using $l_1$ regularized logistic regressions.

The rest of the article is organized as follows.
In Section \ref{Related work} we discuss the existing related work. 
In Section \ref{Feature importance measures based on the Ising model} we present feature importance measures based on the Ising model. We describe two versions of the Ising model: the first one assumes constant interactions between labels, the second one considers feature-dependent interactions. We discuss some theoretical properties of the score statistic and justify using the Ising model. In addition we discuss a version of the Ising model which incorporates all features simultaneously and describe the estimation procedure based on $l_1$ regularized logistic regressions. 
Section \ref{Feature ranking methods and feature selection methods} contains the formal description of our feature ranking procedures as well as feature selection procedure. We present the results of experiments in Section \ref{Experimental results}. Section \ref{Conclusions} concludes with a summary. The technical proofs are provided in Appendix.

\begin{figure}
\begin{center}$
\begin{array}{cc}
\includegraphics[scale=0.25]{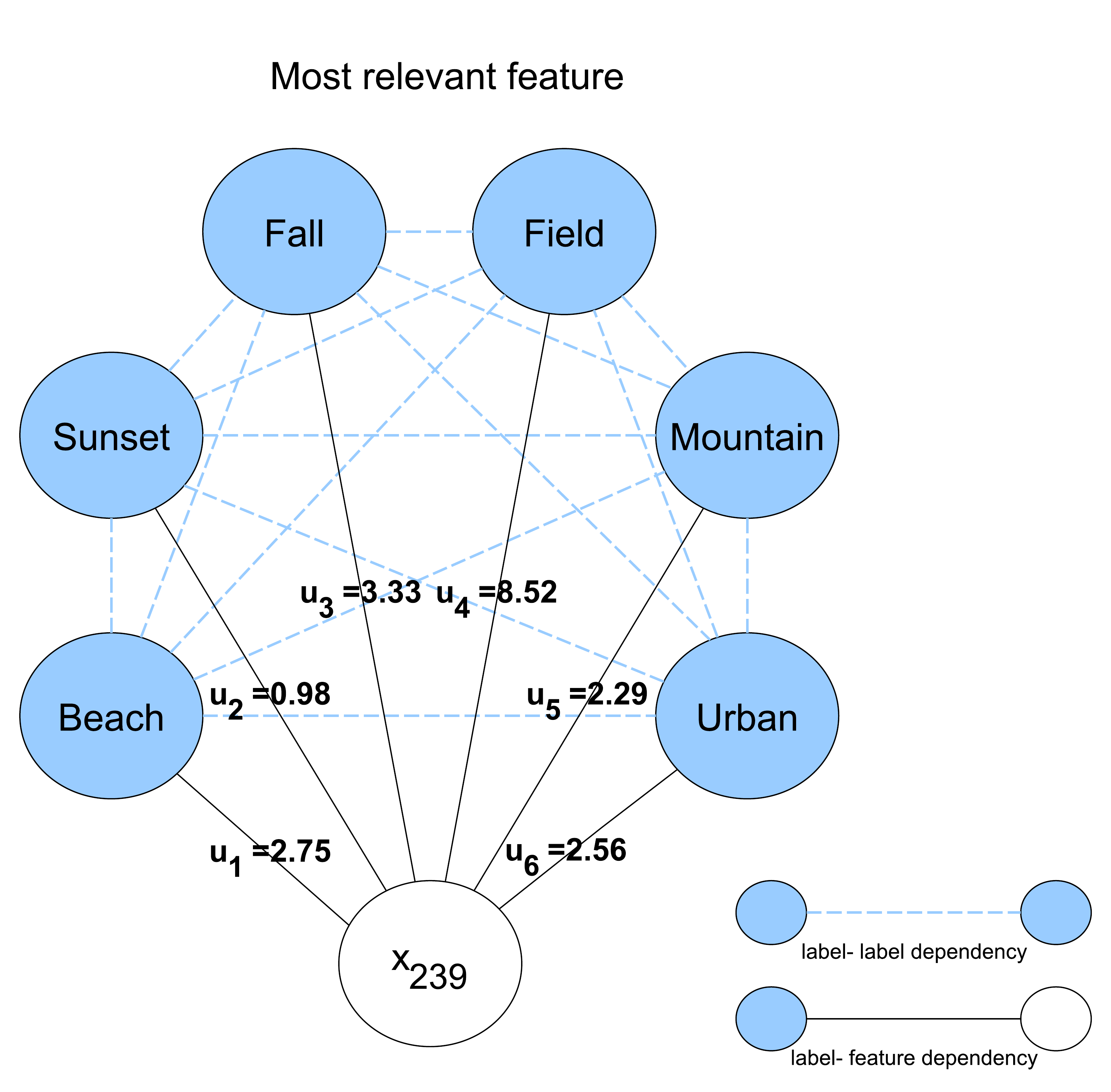} &
\includegraphics[scale=0.25]{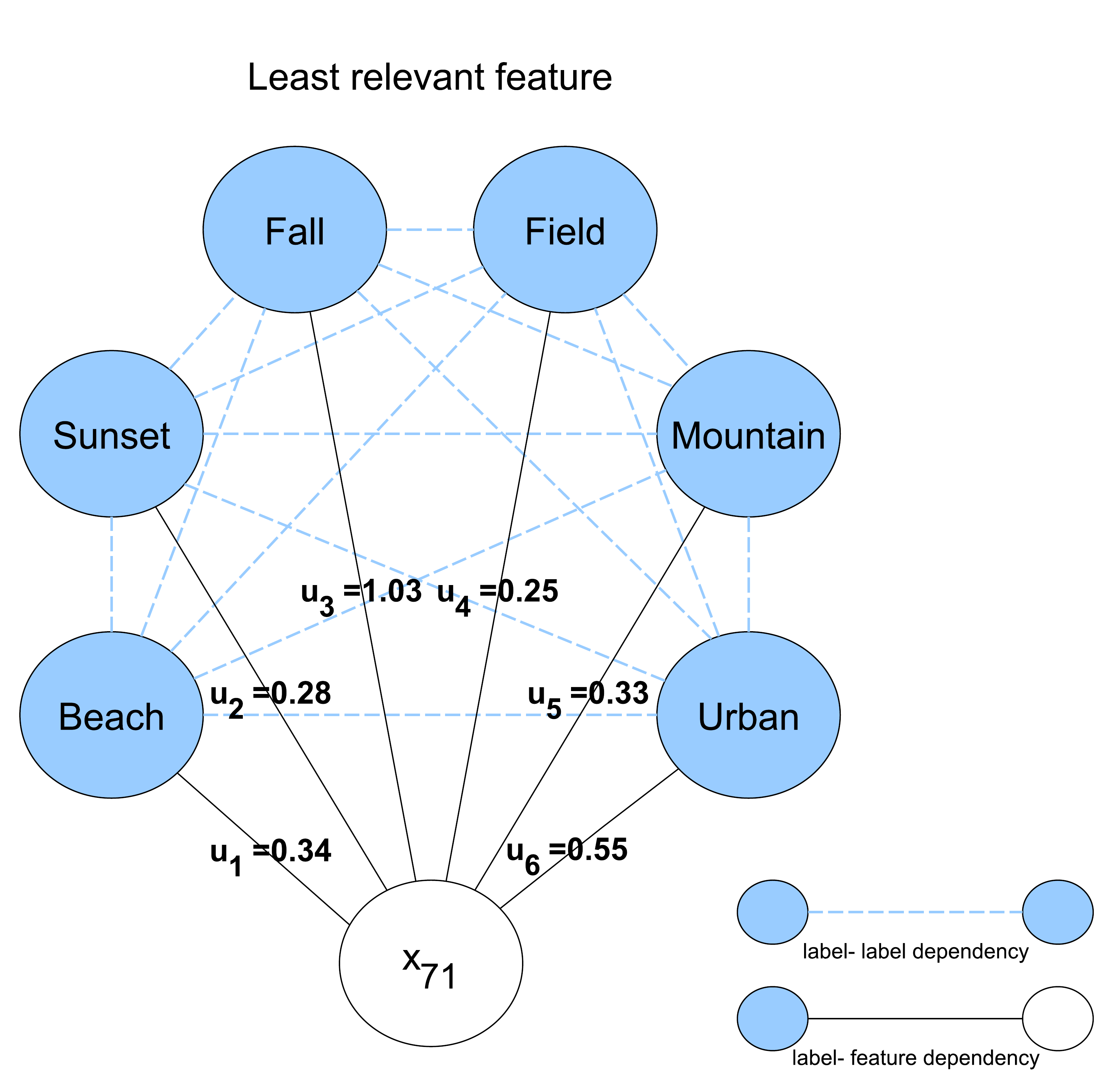} \\
\end{array}$
\end{center}
\caption{Markov networks corresponding to the most ($x_{239}$) and the least ($x_{71}$) significant features for \texttt{scene} dataset. The numbers over edges are scores statistics describing importances of features.}
\label{fig001}
\end{figure}

\section{Related work}
\label{Related work}
Dimensionality reduction of the feature space is one of the current challenges in MLC \citep{Gibaja2015}. There are different approaches to reduce the dimensionality of feature space. The two main groups are: feature selection methods (among which one can distinguish: feature ranking methods, wrappers, embedded methods) and feature transformation methods. 
Feature selection methods aim to identify a small subset of features which influence labels. Feature transformation methods aim to find functions of features, that can replace the original ones.
In this paper we focus on feature ranking methods (FR), which belong to the first group.
FR methods produce a list of features, ordered with respect to their relevance. The final model is built using the most relevant features from the list.

Let us first review the existing FR methods in MLC. The popular approach is to use Binary Relevance (BR) transformation (by considering classification tasks corresponding to separate labels) and to evaluate the relevance of each feature for each of the labels independently (\cite{Trohidisetal2008}, \cite{Chenetal2007}). 
The scores corresponding to different labels are then combined, which yields the global ranking of features.
To evaluate the relevance of features in the tasks, various feature importance measures are used, among which the chi-squared statistic and information gain are the most popular ones (\cite{Spolaor2013}). 
The major drawback of this approach is that possible dependencies between labels are not utilized.
The combinations of BR transformation with the chi-squared statistic and information gain will be referred to as \textit{br chi2} and \textit{br ig}, respectively.

The second popular group of methods is based on LP transformation (\cite{TsoumakasandKatakis2007}) which reduces the multi-label problem to single-label problem with many classes by considering each combination of labels as a distinct meta-class. LP transformation combined with the chi-squared statistic has been used in music classification \citep{Trohidisetal2008}.
This approach requires discretization of features which may lead to loss of some information.
 \cite{DoquireVerleysen2013} proposed to combine LP method and information gain (mutual information), whose estimation is in general a challenging task. They used Kozachenko-Leonenko estimator of entropy \citep{Kozachenko1987}, which is based on nearest neighbours method.
 The approach was also successfully used to assess the relevance of subset of features, not only the individual significance of features, which is a big advantage. The limitation is that the presence of points having the exact same feature values may harm the estimation of entropy based on nearest neighbours. It turns out that the method based on information gain usually outperforms the chi-squared-based approach \citep{DoquireVerleysen2013}. Feature selection based on information gain was also described in \cite{LeeKim2013}.  
The combinations of LP transformation with the chi-squared statistic and information gain will be referred to as \textit{lp chi2} and \textit{lp ig}, respectively.
Although, LP transformation is very simple, it suffers from many serious drawbacks. 
First of all, the number of possible meta-classes can be very large, even larger than the number of observations.
As a result some meta-classes can be represented by a small amount of data and the performance of learning algorithm can be degraded. \cite{Read2008} proposed the Pruned Problem Transformation (PPT) to improve the LP; patterns with too rarely occurring labels are simply removed from the training set by considering label sets with predefined minimum occurrence $\tau$. This modification was also used by \cite{DoquireVerleysen2013}.
The main limitations of this approach are: loss of class information due to removing some observations and the necessity  of choosing the optimal value of $\tau$. Apart from the above drawbacks, in LP-based methods we loose information about dependency structure, i.e. about which labels and how are influenced by a given feature.

Finally, let us also discuss other methods used for dimensionality reduction.
Wrappers allow to assess subsets of features using some criterion function, e.g. prediction error on validation set. To avoid fitting models on all possible subsets, usually some search strategies are used, e.g. forward selection or backward elimination. The main limitation of wrappers is a significant computational cost, due to training large number of classifiers.  
Another important group of methods are so-called embedded feature selection procedures, in which the selection of features is an integral element of the learning process. Examples from this group are: multi-label version of decision trees proposed by \cite{Clare2001} in which the useful features are chosen during building the tree or methods based on $l_1$ regularization \citep{Ravikumaretal2010, Chengetal2014}.

The other important group are feature transformation methods which aim to identify functions of features that can replace the original ones,  e.g. Principal Component Regression \citep{Jolliffe1982} or Partial Least Squares Regression \citep{Martens2001, Wold2001}. Recently Partial Least Squares method has been successfully used in MLC \citep{Liu2015}. Let us also mention about using Canonical Correlation Analysis in multi-label learning \citep{Sunetal2011}. The comprehensive list of feature transformation methods in MLC is given in \cite{Sunetal2014}.
 
\section{Feature importance measures based on the Ising model}
\label{Feature importance measures based on the Ising model}
Before formal description of our method, let us introduce some basic notations for the multi-label learning.
For the convenience of a reader, vectors and matrices are written in bold.
Let $\y=(y_1,\ldots,y_K)^{T}$ be a label vector containing $K$ binary labels and let $\x=(x_1,\ldots,x_p)^{T}$ be a set of $p$ input features.
By $\y_{-k}$ we denote vector $\y$ with $k$-th label removed.
Further, let $\Y$ ($n\times K$) and $\X$ ($n\times p$) be matrices containing instances of $\y$ and $\x$, respectively, in rows. Analogously, let $\Y_{k}$ be $k$-th column of $\Y$ and $\Y_{-k}$ be a matrix $\Y$ with $k$-th column removed. Similarly, let $\X_{j}$ be $j$-th column of $\X$. Finally, the superscript $(i)$ will correspond to $i$-th instance, e.g. $\X^{(i)}$ is $i$-th row of $\X$ and $\X_{j}^{(i)}$ is $i$-th instance of $j$-th column (feature) of $\X$.
The main task in multi-label learning is to build a model based on training data $(\X,\Y)$ which predicts unknown labels for some new objects.
The main goal of FR methods is to evaluate the relevance of features $x_1,\ldots,x_p$ in predicting labels $y_1,\ldots,y_K$, based on data $(\X,\Y)$.
For simplicity, we denote by $y_k\sim x_1,\ldots,x_p$ a classification problem in which $y_k$ is a response (target) variable and $x_1,\ldots,x_p$ are input features.

\subsection{Ising model with constant interaction terms}
\label{Ising model with constant interaction terms}
We start from a simple model in which interactions between labels do not depend on features.
To assess how the individual feature $x_j$ influences the joint distribution of labels we use the Ising model 
\begin{equation} 
\label{ising1}
P(y_1,\ldots,y_K|x_j)=\frac{1}{N(x_j)}\exp\left[\sum_{k=1}^{K}a_kx_jy_k+\sum_{k<l}\beta_{k,l}y_ky_l\right],
\end{equation}
where $a_k, \beta_{k,l}\in R$ are parameters and
\begin{equation}
N(x_j)=\sum_{\y\in\{0,1\}^{K}}\exp\left[\sum_{k=1}^{K}a_kx_jy_k+\sum_{k<l}\beta_{k,l}y_ky_l\right]
\end{equation}
is normalizing constant which ensures that 
the exponential functions sum up to $1$. Note that the normalizing term depends on $x_j$ but does not depend on $\y$. It is assumed that $\beta_{k,l}=\beta_{l,k}$. 
Parameters $a_k$ describe the individual contribution of the labels, whereas $\beta_{k,l}$ correspond to interactions between labels. 
Note that our model (\ref{ising1}) is identical to CORRLog model used in \cite{Bianetal2012}.
Number of authors consider unconditional version of (\ref{ising1}) to model $P(y_1,\ldots,y_n)$ (e.g. \cite{Ravikumaretal2010}). In statistical literature, the unconditional version of (\ref{ising1}) is referred to as auto-logistic model \citep{Besag1972, ZalewskaNiemiro2010}.
Model (\ref{ising1}) describes a simple Markov Network (or more specifically Conditional Random Field, \cite{Laffertyetal2001}) in which vertices correspond to labels and a given feature whereas edges correspond to dependencies. Labels $y_k$ and $y_l$ are conditionally independent given $x_j$ and all other labels if and only if $\beta_{k,l}=0$ (no edge between $y_k$ and $y_l$). The advantage of the above model is that it indicates which labels are influenced by feature $x_j$.
The feature $x_j$ is not relevant when $a_1=\ldots=a_K=0$ (no edges between $x_j$ and the rest of the graph). A natural way to assess the relevance of $x_j$ would be to estimate parameters in model (\ref{ising1}) (using e.g. maximum likelihood approach) and then to perform some statistical test to verify whether $a_k\neq 0$. However it is difficult to estimate unknown parameters in (\ref{ising1}) directly by maximizing the joint conditional log-likelihood since the probability in (\ref{ising1}) includes the normalizing term, which requires summation of $2^{K}$ terms for each data point and makes it intractable for direct maximization. Instead we use simple procedure via node-wise regressions 
suggested in \cite{Ravikumaretal2010}. First it is easy to verify (see \ref{Appendix A} for the proof) that
\begin{equation}	
\label{logodds}  
\log\left[\frac{P(y_k=1|x_j,\y_{-k})}{P(y_k=0|x_j,\y_{-k})}\right]=
\sum_{l:l\neq k}\beta_{k,l}y_l+a_kx_j.
\end{equation} 
It follows from (\ref{logodds}) that in order to estimate parameter vector
\begin{equation*}
\thetab_k=(\beta_{k,1},\ldots,\beta_{k,k-1},\beta_{k,k+1},\ldots,\beta_{k,K},a_k)^{T}\in R^{K},
\end{equation*}
it suffices to fit logistic model $y_k\sim \y_{-k},x_j$ in which $y_k$ is a response variable, whereas labels $y_1,\ldots,y_{k-1},y_{k+1},\ldots,y_{K}$  and feature $x_j$ are the explanatory variables.
The crucial in the above idea, is that the normalizing term $N(x_j)$ is eliminated.
 Now to assess the relevance of feature $x_j$, we propose to use the score statistic \citep{Rao1948} to compare logistic models $y_k\sim \y_{-k}$ and $y_k\sim \y_{-k},x_j$.
Let 
\begin{equation*}
\hthetab_k=(\hat{\beta}_{k,1},\ldots,\hat{\beta}_{k,k-1},\hat{\beta}_{k,k+1},\ldots,\hat{\beta}_{k,K},0)^{T}\in R^{K}
\end{equation*}
 be the maximum likelihood estimator in the smaller model $y_k\sim \y_{-k}$. We augment it to $K$- dimensional vector by setting the last coordinate to $0$. 
 Define $n\times K$ matrix $\Z=(\Y_{-k},\X_{j})$.  
In our setting, the score statistic is defined as
\begin{equation}
\label{score}
u_k(x_j):=|s^2(\hthetab_k)/v(\hthetab_k)|,
\end{equation} 
where
\begin{equation*}
s(\hthetab_k):=\sum_{i=1}^{n}\X_{j}^{(i)}(\Y_{k}^{(i)}-p^{(i)}(\hthetab_k)),
\end{equation*}
\begin{equation*}
p^{(i)}(\hthetab_k)=\frac{\exp(\hthetab_k^{T}\Z^{(i)})}{1+\exp(\hthetab_k^{T}\Z^{(i)})},
\end{equation*}
\begin{equation*}
v(\hthetab_k)=D(\hthetab_k)-\C(\hthetab_k)\A^{-1}(\hthetab_k)\B(\hthetab_k)
\end{equation*}
where
\begin{equation*}
\A(\hthetab_k)=\Y_{-k}^{T}\W(\hthetab_k)\Y_{-k},
\end{equation*}
\begin{equation*}
\B(\hthetab_k)=\Y_{-k}^{T}\W(\hthetab_k)\X_j,
\end{equation*}
\begin{equation*}
\C(\hthetab_k)=\X_{j}^{T}\W(\hthetab_k)\Y_{-k},
\end{equation*}
\begin{equation*}
D(\hthetab_k)=\X_{j}^{T}\W(\hthetab_k)\X_{j}
\end{equation*}
and
$\W(\hthetab_k)$ is $n\times n$ diagonal matrix with $p^{(i)}(\hthetab_k)(1-p^{(i)}(\hthetab_k))$ on diagonal.
Observe that $s(\hthetab_k)$ measures the correlation between added feature and residuals obtained from the smaller model.
The main advantage of using the score statistic is that $\hthetab_k, \W(\hthetab_k)$ and $\A^{-1}(\hthetab_k)$ do not involve $\X_j$ and thus these terms need to be calculated only once. 
Computing the remaining terms: $\B(\hthetab_k)$, $\C(\hthetab_k)$, $D(\hthetab_k)$ and $s(\hthetab_k)$ can be done very quickly and stably, even for thousands of features $x_j$. 
So computation of the score statistics requires fitting only the smaller model $y_k\sim\y_{-k}$.
This is not the case for other popular statistics like the Wald statistic or the Likelihood Ratio statistic \citep{Fahrmeir1987} which involve fitting both $y_k\sim \y_{-k}$ and $y_k\sim \y_{-k},x_j$ models.
In Section \ref{Properties of the score statistic} we prove that, for relevant feature $x_j$ ($a_k\neq 0$), the score statistic tends to infinity when sample size increases and moreover we show that the lower bound of the score statistic is an increasing function of $|a_k|$.

Observe that the larger the value of $u_k(x_j)$, the more important is a feature $x_j$ in model $y_k\sim \y_{-k},x_j$. We check the usefulness of $x_j$ for predicting $k$-th label, when all remaining labels are present in the model. In other words, we test how much adding a feature $x_j$ to labels $\y_{-k}$ improves the prediction of $y_{k}$. 
Consider the following toy examples with one feature $x_1$ and two labels $y_1,y_2$.
The example shows that adding feature $x_1$ to $y_2$ may improve prediction of $y_1$. Consider two binary labels $y_1,y_2$, such that $P(y_2=1)=0.5$ and binary feature $x_1$, such that $P(x_1=1)=0.5$ and assume that $x_1$ is independent from $y_2$ and $y_1=I(y_2+x_1>0)$ (where $I$ is indicator function). It is seen that $y_1$ can be predicted by $y_2$ with maximal accuracy $75\%$ and similarly $y_1$ can be predicted by $x_1$ with accuracy $75\%$. On the other hand when $y_1$ is explained by both $y_2$ and $x_1$, the accuracy is $100\%$. 

\subsection{Properties of the score statistic}
\label{Properties of the score statistic}
In this section we study some theoretical properties of the score statistic (\ref{score}). Recall that the score statistic is used to test the significance of feature $x_j$ in model $y_k\sim\y_{-k},x_j$. 
The score statistic is a classical measure, proposed more than 60 years ago \citep{Rao1948}, however recently it has attracted again a significant attention in high-dimensional problems, mainly because of its low computational cost and good performance. For example, the score statistic has been successfully used for feature ranking in analysing Genome Wide Association Studies \citep{HeLin2011}. 

It is well known fact that when $x_j$ is not significant, i.e. $a_k=0$, then, under some regularity conditions, the score statistic $u_{k}(x_j)$ is approximately distributed as chi-squared with $1$ degree of freedom, for large sample size $n$ (see e.g. \cite{Fahrmeir1987}). Thus in the following we will focus on the performance of $u_{k}(x_j)$, when $x_j$ is significant, i.e. $a_k\neq 0$. Best of our knowledge, the properties of the score statistic under this setting has not yet been discussed.

So assume that $a_k\neq 0$ and we fit the smaller model $y_k\sim \y_{-k}$ from which we have an estimator 
\begin{equation*}
\hthetab_k=(\hat{\beta}_{k,1},\ldots,\hat{\beta}_{k,k-1},\hat{\beta}_{k,k+1},\ldots,\hat{\beta}_{k,K},0)^{T},
\end{equation*}
with the coordinate corresponding to $x_j$ set to $0$.
Recall that $\Z=(\Y_{-k},\X_{j})$.
Let $\lambda_{\min}(\A)$ be the minimal eigenvalue of matrix $\A$. Define $L:=\max_{i,j}|\X^{(i)}_j|$ (to simplify a proof we assume $L>1$), 
$\Lambda_{\min}:=\lambda_{\min}(\Z^{T}\Z/n)$ and let $-G \leq a_k\leq G$. Constant G determines the range of unknown parameter $a_k$, corresponding to variable $x_j$. This constant is introduced to facilitate the proof of Theorem \ref{Theorem 1}.
\begin{Theorem}
\label{Theorem 1}
The following inequality holds
\begin{equation*}
P\left[u_{k}(x_j)\geq \frac{C n a_k^2}{H^4}\Big|\Z\right]\geq 1-K\exp\left[-\frac{Cn(K+L^2)a_k^{2}}{2H^2}\right],
\end{equation*}
where $C=\left(\frac{\Lambda_{\min}v}{2e^{3}L(K+L)^{3/2}}\right)^{2}$, $H=\max(1,G)$ and 
$v=\min_{i}p^{(i)}(\thetab_k)(1-p^{(i)}(\thetab_k))$.
\end{Theorem}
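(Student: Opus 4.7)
The overall strategy is to show that $|s(\hthetab_k)| \gtrsim n|a_k|$ with high probability, and combine this with the easy upper bound $v(\hthetab_k) \leq D(\hthetab_k) \leq nL^2/4$ (since the diagonal of $\W(\hthetab_k)$ is bounded by $1/4$ and $|\X_j^{(i)}|\leq L$). After squaring and dividing, this yields $u_k(x_j) \geq C' na_k^2/H^4$ with the stated probability, matching the form of the theorem.

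I would begin by introducing the population target $\thetab_k^{*}$ of the restricted MLE $\hthetab_k$, defined by the population first-order conditions $\E[\Y_{-k}(\Y_k - p(\thetab_k^{*}))\mid \Z]=\0$. Because the restricted model forces the coefficient of $x_j$ to zero while the true data-generating mechanism has $a_k\neq 0$, the remaining (omitted) coordinate of the score cannot vanish: $\E[\X_j(\Y_k - p(\thetab_k^{*}))\mid \Z]\neq 0$. A direct Taylor expansion of these population equations around $a_k=0$, using the curvature lower bound coming from $\lambda_{\min}(\Z^{T}\W\Z/n)\geq \Lambda_{\min}v$, shows that this residual population score is at least $\Omega(n|a_k|)$ in magnitude, with the proportionality constant tracked through $\Lambda_{\min}$ and $v$ and with the uniformity in $a_k$ absorbed into $H=\max(1,G)$.

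To transfer this population lower bound to the sample statistic, I would use two concentration steps. First, apply Hoeffding's inequality to each coordinate of the sample score at a fixed $\thetab$: each is a sum of $n$ independent summands bounded by $L$, so with probability at least $1-K\exp(-cn(K+L^2)a_k^{2}/H^2)$, a union bound over the $K$ coordinates produces the factor $K$ appearing in the theorem. Second, strong convexity of the reduced log-likelihood (curvature again controlled by $\Lambda_{\min}v$) converts concentration of the sub-score into the deviation bound $\|\hthetab_k-\thetab_k^{*}\|\lesssim |a_k|$; then Lipschitz continuity of $\thetab\mapsto p^{(i)}(\thetab)$ gives $|s(\hthetab_k)-s(\thetab_k^{*})|\lesssim n|a_k|\cdot\text{(small)}$, so the dominant population contribution from Step 1 survives and $|s(\hthetab_k)|\gtrsim n|a_k|$ holds on a high-probability event.

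The main obstacle is the book-keeping: three error contributions must be simultaneously controlled (the Hoeffding slack on the $K-1$ sub-score coordinates used to locate $\hthetab_k$, the Lipschitz drift from $\thetab_k^{*}$ to $\hthetab_k$, and the Hoeffding slack on the $x_j$-coordinate of the score itself), while still reducing the constant to one depending only on $K,L,\Lambda_{\min},v,G$. The presence of $e^{3}$ in $C$ reflects where this bookkeeping is tightest: uniform bounds on the sigmoid and its first two derivatives on a compact range of logits (bounded through $H$) are required to keep the Lipschitz and strong-convexity constants independent of the individual magnitudes of the $\beta_{k,l}$, and the exponent $H^4$ in the deviation threshold arises from squaring a bound that already carried a factor $H^{2}$.
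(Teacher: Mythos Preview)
Your outline is workable, but the paper takes a shorter and rather different route that you may find instructive. Instead of introducing a population pseudo-target $\thetab_k^{*}$ and then transferring a population score lower bound to the sample via concentration plus a Lipschitz argument, the paper anchors everything at the \emph{true} parameter $\thetab_k$ and at the \emph{sample} restricted MLE $\hthetab_k$ directly. The key identity is a Taylor expansion of the full log-likelihood $l(\cdot)$ at $\hthetab_k$: because the first $K-1$ normal equations hold exactly at $\hthetab_k$, the linear term collapses to the single scalar $a_k\,s(\hthetab_k)$, giving
\[
l(\thetab_k)-l(\hthetab_k)=a_k\,s(\hthetab_k)-\tfrac12(\thetab_k-\hthetab_k)^{T}\I(\bar\thetab_k)(\thetab_k-\hthetab_k)\le a_k\,s(\hthetab_k).
\]
Thus a lower bound on the \emph{likelihood gap} immediately becomes a lower bound on $|s(\hthetab_k)|$, with no need for your intermediate $\thetab_k^{*}$, the deviation bound $\|\hthetab_k-\thetab_k^{*}\|\lesssim|a_k|$, or any Lipschitz transfer. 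The required likelihood-gap bound (the paper's Lemma~2) is obtained by a convexity argument: since the last coordinate of $\hthetab_k$ is forced to $0$, $\hthetab_k$ lies outside the ball $\{\|\w-\thetab_k\|\le d\}$ with $d\asymp|a_k|$, and one only needs to show $l(\thetab_k)-l(\w)$ is uniformly large on the \emph{boundary} of that ball; this is where the single Hoeffding application (at $\thetab_k$) and the $e^{-3}$ curvature comparison enter. Your three concentration/perturbation steps are replaced by one Hoeffding bound and one convexity step.

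On the denominator, your observation that $v(\hthetab_k)=D-\C\A^{-1}\B\le D\le nL^{2}/4$ (since $\C=\B^{T}$ and $\A\succ 0$) is in fact cleaner than what the paper does: the paper's Lemma~3 goes through the Schur-complement identity $v^{-1}(\hthetab_k)=[\I^{-1}(\hthetab_k)]_{K,K}$ and then a trace/eigenvalue bound, picking up an extra factor $(K+L^{2})$ that your direct argument avoids. So your variance bound is correct and sharper; the looser version is what produces the particular constant $C$ in the stated theorem.
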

The proof of the above result is provided in \ref{Proof of Theorem1}.
Let us discuss the meaning of the above result as well as effects of different constants. 
It follows from the above Theorem that $u_{k}(x_j)\to\infty$, with probability tending to $1$, as $n\to\infty$, which is a desired result as $u_{k}(x_j)$ should take large values when $x_j$ is significant.
Moreover, it is seen that the lower bound $C n a_k^2/H^4$ is an increasing function of $|a_k|$ and decreasing function of $K$, which 
is concordant with intuition.
Indeed, the larger the value of $a_k$, the more significant is the feature $x_j$. It is very useful property as it allows to assess the significance of the feature $x_j$, without estimating the corresponding unknown coefficient $a_k$. For large value of $K$, it is more difficult to test the significance of the feature $x_j$. 
Constant $v$ is a minimal (where minimum is taken over all training examples) variance of $k$-th label, conditioned on the remaining labels and feature $x_j$. Small value of $v$ indicates that the classes, corresponding to $k$-th label, are almost separable and thus the logistic model may fail.
Very small value of $\Lambda_{\min}$ indicates that columns of matrix $\Z$ are almost linearly dependent, which may harm the fitting of logistic model. We show in Lemma \ref{Lemma4} that $\Lambda_{\min}>0$ ensures that the likelihood function corresponding to larger model is strictly concave. In addition observe that the lower bound $C n a_k^2/H^4$ is a decreasing function of $v$ and $\Lambda_{\min}$, which is again intuitive: the more difficult the problem, the more challenging is identification of a significant variable. Finally, constants $H$ and $G$ are introduced for technical reasons, to facilitate the proof.

The following example illustrates the above theoretical result on artificial data.
Consider one feature $x_1$ and ten labels $y_1,\ldots,y_{10}$.
 We generate labels $y_2,\ldots,y_{10}$ independently, from binomial distribution with success probability $0.5$. Then we generate $y_1$, from (\ref{logodds}), with $\beta_{k,l}=0.1$, and $x_1$ drawn from standard Gaussian distribution.
The simulations are repeated $50$ times.
Figure \ref{fig00} (a) shows smoothed histograms of the score statistics when $x_1$ is relevant ($a_1=1$) and irrelevant ($a_1=0$). In the latter case, the values of the score statistics remain close to zero. Figure \ref{fig00} (b) shows the score statistics w.r.t. increasing value of $a_1$ (coefficient corresponding to $x_1$), for different sample sizes. It is clearly seen that the larger the value of the coefficient, the larger the value of the score statistic.

\begin{figure}
\begin{center}$
\begin{array}{cc}
\includegraphics[scale=0.45]{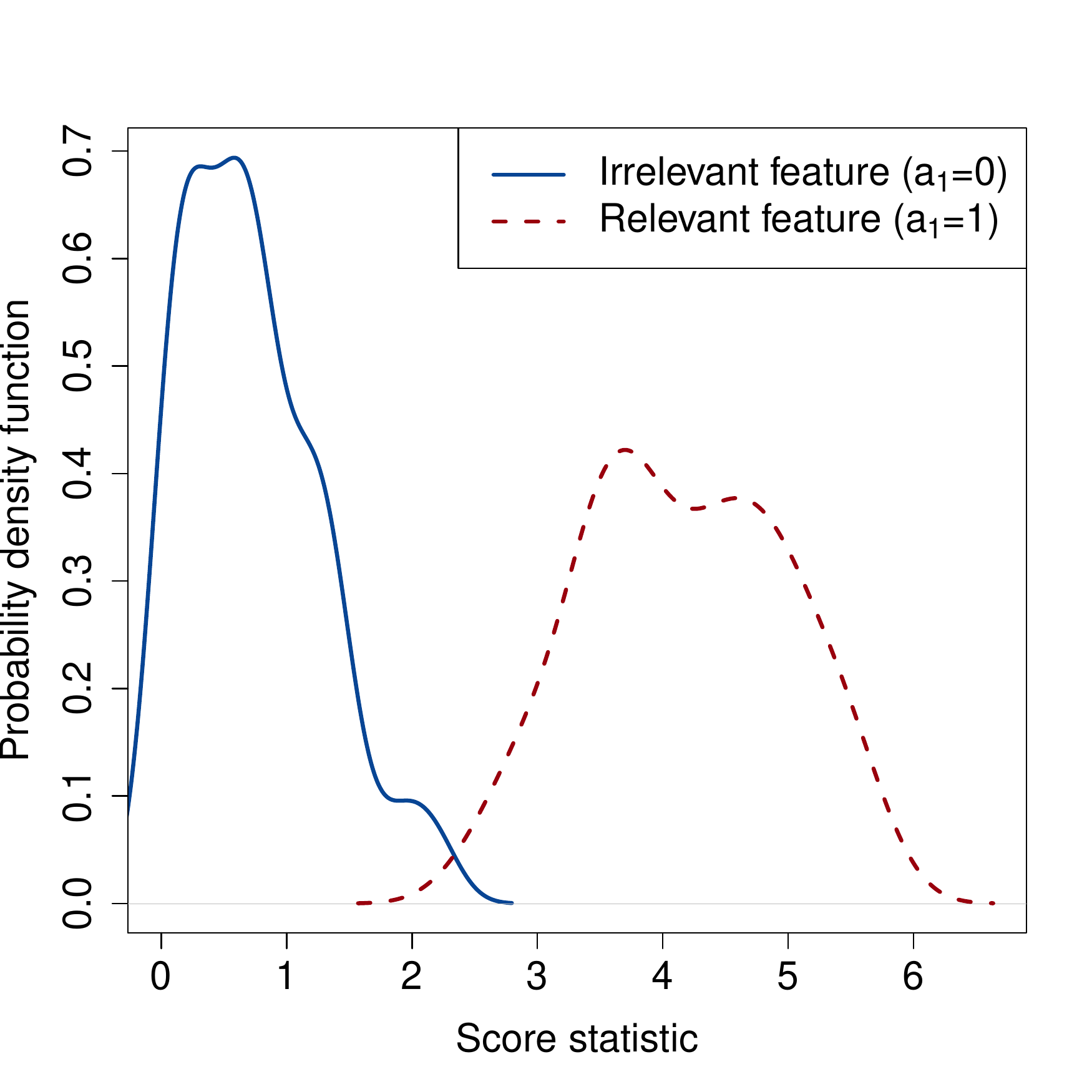} &
\includegraphics[scale=0.45]{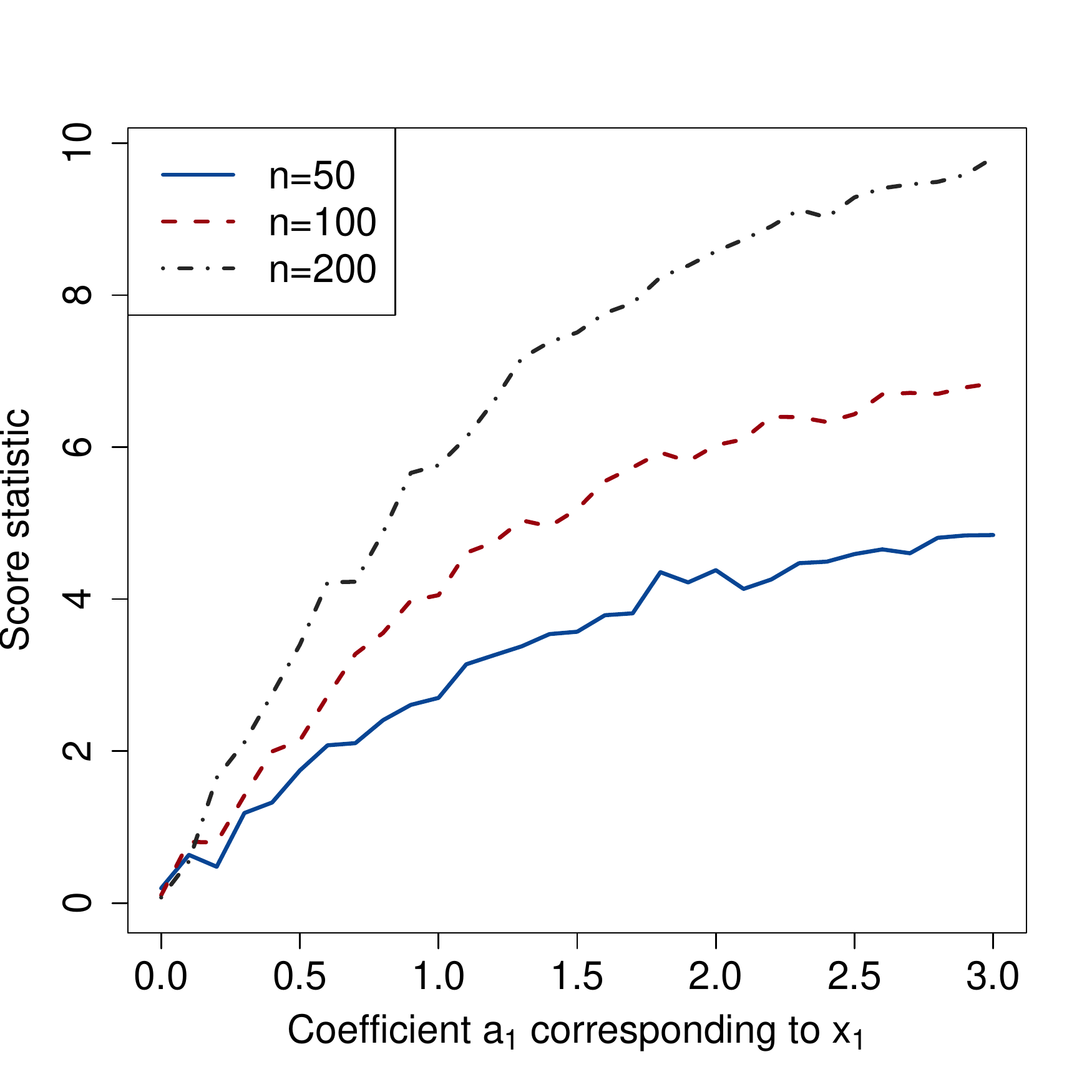} \\
\textrm{(a)} & \textrm{(b)} \\
\end{array}$
\end{center}
\caption{Performance of the score statistics for artificial data.}
\label{fig00}
\end{figure}

\subsection{Ising model with feature-dependent interaction terms}
\label{Ising model with feature-dependent interaction terms}
In real applications, it is often the case that interactions between labels depend on features. In order to model this situation, we expand model (\ref{ising1}) as
\begin{equation} 
\label{ising2}
P(y_1,\ldots,y_K|x_j)=\frac{1}{N(x_j)}\exp\left[\sum_{k=1}^{K}a_kx_jy_k+\sum_{k<l}(\beta_{k,l}+b_{k,l}x_j)y_ky_l\right],
\end{equation}
where $b_{k,l}$ describes how strong feature $x_j$ influences the interactions between $y_k$ and $y_l$.
The price for considering feature-dependent interactions is larger number of parameters.
Analogous reasoning as in (\ref{logodds}) leads to
\begin{equation}	
\label{logodds2}  
\log\left[\frac{P(y_k=1|x_j,\y_{-k})}{P(y_k=0|x_j,\y_{-k})}\right]=
\sum_{l:l\neq k}\beta_{k,l}y_j+\sum_{l:l\neq k}b_{k,l}x_jy_l+a_kx_j.
\end{equation} 
It follows from (\ref{logodds2}) that in order to estimate $\beta_{k,l},b_{k,l},a_k$, it suffices to fit logistic model $y_k\sim \y_{-k},x_j\y_{-k},x_j$ in which $y_k$ is a response variable, whereas $\y_{-k}$, $x_j\y_{-k}$ and $x_j$ are the explanatory variables.
Now to assess the relevance of feature $x_j$, we compare models $y_k\sim \y_{-k}$ and $y_k\sim \y_{-k},x_j\y_{-k},x_j$. 
Define vector $\m=(x_j\y_{-k},x_j)^{T}$ and let $M$ be $n\times K$ matrix containing realizations of $\m$ in rows.
Let $\hthetab_k$ be an estimator based on smaller model $y_k\sim\y_{-k}$.
Multivariate version of the score statistic is defined as 
\begin{equation}
\label{score1}
U_k(x_j):=|\Sb^{T}(\hthetab_k)\V^{-1}(\hthetab_k)\Sb(\hthetab_k)|,
\end{equation} 
where
\begin{equation*}
\Sb(\hthetab_k):=\M^{T}(\Y_{k}-\p(\hthetab_k)),
\end{equation*}
\begin{equation*}
\p(\hthetab_k)=(p^{(1)}(\hthetab_k),\ldots,p^{(n)}(\hthetab_k))^{T},
\end{equation*}
\begin{equation*}
\V(\hthetab_k)=\D(\hthetab_k)-\C(\hthetab_k)\A^{-1}(\hthetab_k)\B(\hthetab_k)
\end{equation*}
where
\begin{equation*}
\A(\hthetab_k)=\Y_{-k}^{T}\W(\hthetab_k)\Y_{-k},
\end{equation*}
\begin{equation*}
\B(\hthetab_k)=\Y_{-k}^{T}\W(\hthetab_k)\M,
\end{equation*}
\begin{equation*}
\C(\hthetab_k)=\M^{T}\W(\hthetab_k)\Y_{-k},
\end{equation*}
\begin{equation*}
\D(\hthetab_k)=\M^{T}\W(\hthetab_k)\M.
\end{equation*}

\subsection{Ising model and $l_1$ regularization}
\label{Ising model and $l_1$ regularization}
Model (\ref{ising1}) allows to assess how the single feature $x_j$ influences the joint distribution of labels. To investigate how the whole vector of features $\x\in R^{p}$ influences the joint distribution of labels we propose to use the following model
\begin{equation} 
\label{ising3a}
P(y_1,\ldots,y_K|\x)=\frac{1}{N(\x)}\exp\left[\sum_{k=1}^{K}\ba_k^{T}\x y_k+\sum_{k<j}\beta_{k,j}y_ky_j\right],
\end{equation}
where $\ba_k=(a_{k,1},\ldots,a_{k,p})^{T}$ is a $p$-dimensional parameter vector and
\begin{equation}
N(\x)=\sum_{\y\in\{0,1\}^{K}}\exp\left[\sum_{k=1}^{K}\ba_k^{T}\x y_k+\sum_{k<j}\beta_{k,j}y_ky_j\right]
\end{equation}
is normalizing constant.
Note that model (\ref{ising3a}) incorporates all features $\x=(x_1,\ldots,x_p)^{T}$ simultaneously, not only a single feature $x_j$ as in (\ref{ising1}) and (\ref{ising2}). 
Similar reasoning as in (\ref{logodds}) leads to
\begin{equation}	
\label{logodds}  
\log\left[\frac{P(y_k=1|\x,\y_{-k})}{P(y_k=0|\x,\y_{-k})}\right]=
\sum_{l:l\neq k}\beta_{k,l}y_l+\ba_k^{T}\x,
\end{equation} 
which as in the case of (\ref{ising1}) and (\ref{ising2}) suggests that the unknown parameters can be estimated using logistic regression. Since the dimension of $\x$ can be large, we use $l_1$ regularization to estimate parameter vector
\begin{equation*}
\thetab_k=(\beta_{k,1},\ldots,\beta_{k,k-1},\beta_{k,k+1},\ldots,\beta_{k,K},\ba_k)^{T}\in R^{K+p-1},
\end{equation*}
where $\ba_k=(a_{k,1},\ldots,a_{k,p})^{T}$. So the estimate vector is obtained as
\begin{equation}
\label{l1reg}
\hat{\thetab}_k=(\hat{\beta}_{k,1},\ldots,\hat{\beta}_{k,k-1},\hat{\beta}_{k,k+1},\ldots,\hat{\beta}_{k,K},\hat{\ba}_k)^{T}=\arg\min_{\thetab_k\in R^{p+K-1}}\{-l(\thetab_k)+\lambda||\thetab_k||_1\},
\end{equation}
where $l_k(\cdot)$ is a logistic log-likelihood function corresponding to (\ref{l1reg}), $\lambda>0$ is a parameter and $||\cdot||_1$ is $l_1$ norm. The above procedure was used in \cite{Ravikumaretal2010}, \cite{Bianetal2012} and \cite{Chengetal2014}. The advantage of this approach is that we assess the joint relevance of all features, not only the individual relevance of single feature $x_j$, as in the case of models (\ref{ising1}) and (\ref{ising2}). On the other hand, this approach is much more computationally demanding than fitting models (\ref{ising1}) and (\ref{ising2}). 
We will show in Section \ref{Feature ranking methods and feature selection methods} how to use the above procedure to construct the ranking of features.
\subsection{Why Ising model?}
\label{Why Ising model}
The first advantage of the Ising model is that it precisely describes the dependence structure between labels and a given feature $x$, i.e. it indicates which labels and interactions of labels are influenced by feature $x$.
Secondly, it follows from (\ref{logodds}) and (\ref{logodds2}) that fitting the model can be done relatively simply.
Finally, it turns out that the Ising model is a maximum entropy model, i.e. it maximizes the entropy under some constraints on the expectations of labels and interactions between labels. The details are given below. Assume that we would like to find a conditional distribution of labels $g(\y|x)$, which maximizes the entropy $H_{g}(\y|x)=-\sum_{\y}g(\y|x)\log(g(\y|x))$ under the standard constraints $g(\y|x)\geq 0$, $\sum_{\y}g(\y|x)=1$ and two additional constraints:
\begin{equation}
\label{constr1}
\sum_{\y}g(\y|x)y_k=A_{k}(x),\quad k=1,\ldots,K,
\end{equation}
\begin{equation}
\label{constr2}
\sum_{\y}g(\y|x)y_ky_l=B_{k,l}(x), \quad k<l,
\end{equation}
where $A_{k}(x)$ and $B_{k,l}(x)$ are fixed terms dependent on $x$. The above constraints are very natural in multi-label setting and they simply mean that the expectations of labels as well as interactions between labels (with respect to $g(\y|x)$) are fixed. 
The following fact is proved in \ref{Appendix B}.
\begin{Proposition}
\label{Proposition 1}
Let $g(\y|x)$ be any probability function satisfying (\ref{constr1}), (\ref{constr2}) and let $p(\y|x)$ be probability of the form (\ref{ising2}) also satisfying constraints (\ref{constr1}), (\ref{constr2}). Then $H_{g}(\y|x)\leq H_{p}(\y|x)$. 
\end{Proposition}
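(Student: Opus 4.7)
The cleanest route is the standard maximum-entropy argument via the non-negativity of the Kullback--Leibler divergence applied to the pair $(g(\cdot|x), p(\cdot|x))$; this avoids Lagrange multipliers entirely and exploits the crucial fact that both $g$ and $p$ satisfy the same moment conditions.

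The plan is as follows. First I would invoke Gibbs' inequality: since $g(\y|x)$ and $p(\y|x)$ are both probability distributions on $\{0,1\}^K$,
\begin{equation*}
D(g \,\|\, p) = \sum_{\y} g(\y|x)\log\frac{g(\y|x)}{p(\y|x)} \geq 0,
\end{equation*}
which rearranges to $H_g(\y|x) \leq -\sum_{\y} g(\y|x)\log p(\y|x)$. So it suffices to show that the right-hand side equals $H_p(\y|x)$.

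Next I would plug in the explicit form of $p$ from (\ref{ising2}):
\begin{equation*}
\log p(\y|x) = \sum_{k=1}^{K} a_k x\, y_k + \sum_{k<l}(\beta_{k,l}+b_{k,l}x)\, y_k y_l - \log N(x).
\end{equation*}
Multiplying by $g(\y|x)$ and summing over $\y$, the linear dependence on $y_k$ and on $y_k y_l$ means the sum collapses to first- and second-moment terms. Applying the constraints (\ref{constr1}) and (\ref{constr2}) to $g$, together with $\sum_{\y} g(\y|x)=1$, I obtain
\begin{equation*}
-\sum_{\y} g(\y|x)\log p(\y|x) = -\sum_{k=1}^{K} a_k x\, A_k(x) - \sum_{k<l}(\beta_{k,l}+b_{k,l}x)\, B_{k,l}(x) + \log N(x).
\end{equation*}
The key observation is that the right-hand side depends on $g$ only through $A_k(x)$ and $B_{k,l}(x)$. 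Since $p$ is assumed to satisfy the same constraints (\ref{constr1}), (\ref{constr2}) with the same $A_k(x)$ and $B_{k,l}(x)$, repeating the identical computation with $p$ in place of $g$ gives exactly $H_p(\y|x)$ on the right-hand side. Combining, $H_g(\y|x) \leq H_p(\y|x)$.

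There is really no obstacle here beyond careful bookkeeping; the argument is almost automatic once one writes down $\log p$ and uses shared moments. The only subtlety worth a short remark in the proof is the standard convention that $0\log 0 = 0$ so that $D(g\|p)$ is well-defined on the support of $p$, and the observation that $p(\y|x)>0$ for all $\y\in\{0,1\}^K$ (since the exponential is strictly positive and the sum defining $N(x)$ is finite), so $D(g\|p)$ is always finite and the inequality applies without extra assumptions on the support of $g$.
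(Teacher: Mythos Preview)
Your proposal is correct and follows essentially the same route as the paper's own proof: both start from the non-negativity of the Kullback--Leibler divergence $D(g\|p)\ge 0$ to get $H_g(\y|x)\le -\sum_{\y}g(\y|x)\log p(\y|x)$, then use the explicit exponential form of $p$ together with the shared moment constraints (\ref{constr1})--(\ref{constr2}) to conclude that this cross-entropy equals $H_p(\y|x)$. Your write-up is slightly more explicit (spelling out the intermediate expression in terms of $A_k(x)$ and $B_{k,l}(x)$ and noting $p(\y|x)>0$), but the argument is the same.
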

Thus, according to the above Proposition and the principle of maximum entropy \citep{Jaynes1957}, the Ising distribution is the most adequate one in a situation when constraints (\ref{constr1}), (\ref{constr2}) are taken into account.
\section{Feature ranking methods and feature selection methods}
In this Section, we show how to use the Ising model described in previous Section, to construct rankings of features. The first approach is based on the Ising model with constant interaction terms and the score statistic. The second approach is based on the Ising model with feature-dependent interaction terms and the score statistic. The third approach, computationally most expensive, is based on fitting $l_1$ regularized logistic regressions.  
\label{Feature ranking methods and feature selection methods}
\subsection{Feature ranking based on the Ising model with constant interaction terms}
We use the Ising model with constant interaction terms and the score statistic to construct the first feature importance measure.
As a feature importance measure for feature $x_j$ we take $imp(x_j):=\sum_{k=1}^{K}u_k(x_j)$.
Recall that $u_k(x_j)$ is non-negative.
Note that, the more important is a feature $x_j$ for predicting consecutive labels (conditioning on the remaining labels), the greater is the measure $imp(x_j)$.  
In addition, the more labels are influenced by $x_j$, the greater is the measure $imp(x_j)$.

Based on the above feature importance measure we propose the following FR procedure which consists of two steps. We initially fit the unconditioned Ising model using only labels, which requires fitting $K$ logistic models, with $K-1$ input features, each. The first step is the price for avoiding LP transformation.
Since the models are fitted independently, the loop can be computed in parallel.
In the second step we assess whether adding input features improve the fitting of the model from the first step. The second step is very efficient and allows to assess the importance of thousands of features quickly.
Thus the method is tailored to the case of large number of features and moderate number of labels.
Figure \ref{fig0} shows networks corresponding to these two steps in the case of three labels.
The whole procedure is described by Algorithm \ref{alg1}.  In simulation experiments we will refer to this method as \textit{ising+score}.
\begin{algorithm}[ht!]
\caption{Feature ranking based on the Ising Model with constant interaction terms (\textit{ising+score})}

\SetKwInOut{Input}{Input}
\SetKwInOut{Initialize}{Initialize}
\SetKwInOut{Output}{Output}

\KwData{$\X (n\times p)$, $\Y (n\times K)$}

  $\#$1 step: fitting the unconditioned Ising model:

  \For{k $\leftarrow$ $1$ \KwTo $K$}{
  
  Fit logistic model $y_{k}\sim \y_{-k}$;
	
  Obtain terms not involving $\X$: $\hthetab_k$, $\W(\hthetab_k)$ and $\A^{-1}(\hthetab_k)$;	
 }

 $\#$2 step: feature ranking:
 
  \For{j $\leftarrow$ $1$ \KwTo $p$}{
    \For{k $\leftarrow$ $1$ \KwTo $K$}{
	  $\#$Low computational cost:    
    
	  Calculate terms involving $\X_j$: $\B(\hthetab_k)$, $\C(\hthetab_k)$, and $D(\hthetab_k)$;	    
    
      Compute $u_k(x_j)$ using $\hthetab_k$, $\W(\hthetab_k)$, $\A^{-1}(\hthetab_k)$ and $\B(\hthetab_k)$, $  \C(\hthetab_k)$, $D(\hthetab_k)$;
      
    }
   Compute feature importance measure $imp(x_j)=\sum_{k=1}^{K}u_k(x_j)$;
 }
 
 Sort features with decreasing order of $imp$: $imp(x_{j_1})\geq\ldots\geq imp(x_{j_p})$
 
\KwOut{Ordered list of features $x_{j_1},\ldots,x_{j_p}$}
\label{alg1}
\end{algorithm}
\subsection{Feature ranking based on the Ising model with feature-dependent interaction terms}
We use the Ising model with feature-dependent  interaction terms and the score statistic to construct the second feature importance measure.
As a feature importance measure for feature $x_j$ we can take $imp(x_j):=\sum_{k=1}^{K}U_k(x_j)$. Alternatively, one can verify the individual contribution of each interaction by taking 
$imp(x_j):=\sum_{k=1}^{K}[u_k(x_j)+\sum_{s:s\neq k}u_k(x_jy_s)]$. Although the advantage of former version is that we assess the joint contributions of all interactions, it is computationally more demanding and less stable due to inversion of $\V(\hthetab_k)$. Since the performances of these two versions were similar, in next sections we present the results for the second version.
The whole procedure is described by Algorithm \ref{alg2}.
In simulation experiments we will refer to this method as \textit{ising inter+score}.
The following toy example, with two labels $y_1, y_2$ and one feature $x_1$, shows that method based on the Ising model with constant interactions may fail, whereas the improved version with feature-dependent interactions will succeed. Consider XOR problem in which $x_1=0$ for $(y_1,y_2)=(1,1)$ or $(y_1,y_2)=(0,0)$ and $x_1=1$ for $(y_1,y_2)=(1,0)$ or $(y_1,y_2)=(0,1)$. In this case feature $x_1$ should be recognized as significant as it partly determines the combination of labels although it is independent from both labels. Adding $x_1$ to logistic model $y_1\sim y_2$ does not improve the model fitting and will result in the score statistic close to $0$. On the other hand, adding both $x_1$ and $x_1y_2$ to model $y_1\sim y_2$ improves the model fitting significantly and thus will result in large value of the score statistic. 
\begin{algorithm}[ht!]
\caption{Feature ranking based on the Ising Model with feature-dependent interaction terms (\textit{ising inter+score})}

\SetKwInOut{Input}{Input}
\SetKwInOut{Initialize}{Initialize}
\SetKwInOut{Output}{Output}

\KwData{$\X (n\times p)$, $\Y (n\times K)$}

  $\#$1 step: fitting the unconditioned Ising model:

  \For{k $\leftarrow$ $1$ \KwTo $K$}{
  
  Fit logistic model $y_{k}\sim \y_{-k}$;
	
  Obtain terms not involving $\X$: $\hthetab_k$, $\W(\hthetab_k)$ and $\A^{-1}(\hthetab_k)$;	
 }

 $\#$2 step: feature ranking:
 
  \For{j $\leftarrow$ $1$ \KwTo $p$}{
    \For{k $\leftarrow$ $1$ \KwTo $K$}{
	     
	  Calculate terms involving $\X_j$: $\B(\hthetab_k)$, $\C(\hthetab_k)$, and $D(\hthetab_k)$;	    
    
      Compute $u_k(x_j)$ and $u_k(x_jy_1),\ldots,u_k(x_jy_{k-1}),u_k(x_jy_{k+1}),\ldots,u_k(x_jy_{K})$ using $\hthetab_k$, $\W(\hthetab_k)$, $\A^{-1}(\hthetab_k)$ and $\B(\hthetab_k)$, $  \C(\hthetab_k)$, $D(\hthetab_k)$;
      
    }
   Compute feature importance measure $imp(x_j):=\sum_{k=1}^{K}[u_k(x_j)+\sum_{s:s\neq k}u_k(x_jy_s)]$;
 }
 
 Sort features with decreasing order of $imp$: $imp(x_{j_1})\geq\ldots\geq imp(x_{j_p})$
 
\KwOut{Ordered list of features $x_{j_1},\ldots,x_{j_p}$}
\label{alg2}
\end{algorithm}
\subsection{Feature ranking based on the Ising model and $l_1$ regularization}
The third feature importance measure is based on the Ising model and $l_1$ regularization. 
Let $\hat{\ba}_k=(\hat{a}_{k,1},\ldots,\hat{a}_{k,p})^{T}$ be the estimate vector which optimizes function (\ref{l1reg}). Observe that using $l_1$ regularization encourages sparsity, i.e. some coefficients $\hat{a}_{k,j}$ will be exactly zero. 
We define the feature importance measure as $imp(x_j):=\sum_{k=1}^{K}|\hat{a}_{k,j}|$, i.e. we aggregate the coefficients describing the influence of feature $x_j$ on labels, in a presence of the remaining labels and remaining features. 
Note that $\hat{a}_{k,j}$ depends on parameter $\lambda$ in (\ref{l1reg}); in experiments we take $\lambda=0.0001\lambda_{\max}$, where $\lambda_{\max}$ is a value of $\lambda$ for which all coordinates of $\hat{\thetab}_k$ are exactly $0$. 
The whole procedure is described by Algorithm \ref{alg3}.
In simulation experiments we will refer to this method as \textit{ising+l1}.
\begin{algorithm}[ht!]
\caption{Feature ranking based on the Ising Model and $l_1$ regularization (\textit{ising+l1})}

\SetKwInOut{Input}{Input}
\SetKwInOut{Initialize}{Initialize}
\SetKwInOut{Output}{Output}

\KwData{$\X (n\times p)$, $\Y (n\times K)$}

  $\#$1 step: Fitting the Ising model using $l_1$ regularized logistic regressions

  \For{k $\leftarrow$ $1$ \KwTo $K$}{
  
  Compute $\hat{\thetab}_k=(\hat{\beta}_{k,1},\ldots,\hat{\beta}_{k,k-1},\hat{\beta}_{k,k+1},\ldots,\hat{\beta}_{k,K},\hat{\ba}_k)^{T}=\arg\min_{\thetab_k\in R^{p+K-1}}\{-l_k(\thetab_k)+\lambda||\thetab_k||_1\}$,
  
  where $\hat{\ba}_k=(\hat{a}_{k,1},\ldots,\hat{a}_{k,p})^{T}\in R^{p}$;	
 }

 $\#$2 step: feature ranking:
 
  \For{j $\leftarrow$ $1$ \KwTo $p$}{
    
   Compute feature importance measure $imp(x_j):=\sum_{k=1}^{K}|\hat{a}_{k,j}|$;
 }
 
 Sort features with decreasing order of $imp$: $imp(x_{j_1})\geq\ldots\geq imp(x_{j_p})$
 
\KwOut{Ordered list of features $x_{j_1},\ldots,x_{j_p}$}
\label{alg3}
\end{algorithm}
\begin{figure}
\begin{center}$
\begin{array}{c}
\includegraphics[scale=0.25]{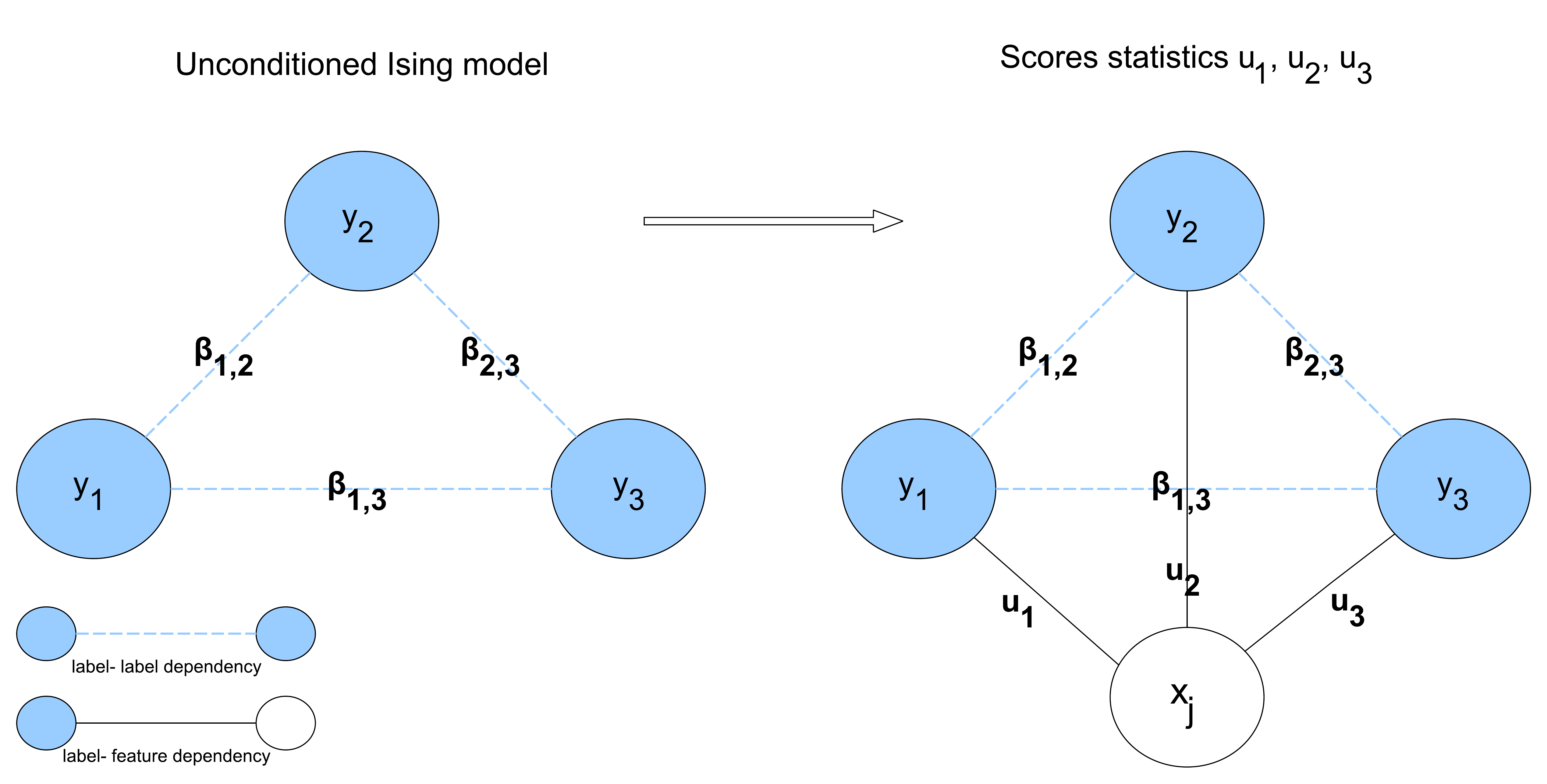}
\end{array}$
\end{center}
\caption{Example scheme of feature importance assessment for $3$ labels.}
\label{fig0}
\end{figure}

\subsection{Feature selection}
\label{Feature selection}
Feature ranking procedures, presented above, allow to order features with respect to their importances, i.e. they produce an ordered list of features $x_{j_1},\ldots,x_{j_p}$, where $x_{j_1}$ is the most relevant feature, whereas $x_{j_p}$ is the least relevant feature. Below we describe how we choose the final subset of features based on the ranking.
Assume that we have multi-label classifier $C(x_1,\ldots,x_p)$ which takes as an input features $x_1,\ldots,x_p$ and returns multi-label output. This classifier is used as a final model. In simulation experiments, classifier chains \citep{Readetal2011, Dembczynskietal2010} were used as a final classifier $C(\cdot)$.     
 First, we split our data into training and validation sets (in simulation experiments: $70\%$ for training and $30\%$ for validation). Training data is used to obtain ranking of features: $x_{j_1},\ldots,x_{j_p}$ and build classifiers. 
We train classifiers $C(x_{j_1}),C(x_{j_1},x_{j_2}),\ldots,C(x_{j_1},\ldots,x_{j_L})$, where $L<p$
and choose a subset $\{x_{j_1},\ldots,x_{j_s}\}$ ($s\leq L$), for which classifier $C(x_{j_1},\ldots,x_{j_s})$ achieves the maximal value of some evaluation measure calculated on validation set. In simulation experiments we use subset accuracy as an evaluation measure. Observe that the final classifier $C(\cdot)$ is built on subsets of features whose size does not exceed $L$ (we set $L=0.2p$). This is very natural approach in the case of large number of possible features $p$ and when the final classifier $C(\cdot)$ requires a significant computational effort (and thus cannot be easily trained on all possible features). Moreover, usually in real data only a small set of features influences the values of labels.     
In the desired feature ranking, the relevant features should precede the spurious ones, and in a consequence the final classifier is built on the subset of most relevant features. When the ranking of features is poor (i.e. there are many spurious features on the top of the list), the resulting classifier will perform poorly. 

\section{Experimental results}
\label{Experimental results}
In this section we evaluate the effectiveness of the proposed methods by comparing its performance against conventional FR methods. We consider two versions of our method: the first one is based on the Ising model with constant interactions (denoted by \textit{ising+score}); the second one is based on the Ising model with feature-dependent interactions (\textit{ising inter+score}). Moreover we consider a method which is  based on the Ising model and $l_1$ regularization (\textit{ising+l1}).
We use two state-of-the-art methods based on BR transformation combined with chi squared statistic (\textit{br chi2}) and information gain (\textit{br ig}). 
In addition we use two conventional methods based on LP transformation combined with chi squared statistic (\textit{lp chi2}) or information gain (\textit{lp ig}). We also experimented with very simple OneR filter \citep{Holte1993}, but the performance was disappointing, so we do not present the results here. 
We carried out experiments on both artificial and real datasets.

\subsection{Evaluation measures}
\label{Evaluation measures}
To evaluate the performance of feature ranking methods on artificial data, we use a form of ROC curves, constructed in the following way (the similar evaluation can be found in \cite{Chengetal2014}).
 Let $i_1,\ldots,i_p$ be the ranking of features from given feature ranking method (where $i_1$ corresponds to a feature recognized as the most significant by an algorithm) and $t$ be a set of true relevant features. Let $TPR(k):=|\{i_1,\ldots,i_k\}\cap t|/|t|$, and $FPR(k):=|\{i_1,\ldots,i_k\}\setminus t|/|t^C|$, where $|\cdot|$ is set cardinality and $t^C$ is set complement.
So, $TPR(k)$ indicates how many relevant features are among top $k$ ones whereas $FPR(k)$ indicates how many redundant features are among top $k$ ones.
Now, ROC curve is defined as $(FPR(k),TPR(k))$, $k=1,\ldots,p$. Observe that $AUC=1$ corresponds to perfect ordering of features, i.e. all relevant features precede spurious ones in the ranking. On the other hand $AUC\approx 0.5$ corresponds to random ordering of features. Each curve is smoothed over $20$ simulations.
This type of evaluation is used to provide an attractive visualization. Note that our ROC curves differ from standard ROC curves used to evaluate the performance of classification models, although the idea is very similar. Our ROC curves are used to evaluate the quality of rankings, not the classification performance.

For real data, we cannot produce ROC curves described above as the relevant features are not known. Thus, in the case of real data, we use standard evaluation measures, described below.
Let $\hat{\y}=(\hat{y}_1,\ldots,\hat{y}_K)^T$ be a vector of predicted labels and $\y=(y_1,\ldots,y_K)^T$ be a vector of true labels.
We consider the following evaluation measures
\begin{equation*}
\textrm{Subset accuracy}(\y,\hat{\y})=I[\y=\hat{\y}],
\end{equation*}
\begin{equation*}
\textrm{Hamming measure}(\y,\hat{\y})=\frac{1}{K}\sum_{k=1}^{K}I[y_k=\hat{y}_k],
\end{equation*}
\begin{equation*}
\textrm{Jaccard measure}(\y,\hat{\y})=\frac{\sum_{k=1}^{K}I(y_k=1 \textrm{ and } \hat{y}_k=1)}{\sum_{k=1}^{K}I(y_k=1 \textrm{ or } \hat{y}_k=1)}
\end{equation*}
The measures are averaged over all instances in test set.
The higher the above measures, the better the performance. The measures demonstrate different aspects of multi-label classification performance. Subset accuracy corresponds to subset $0-1$ loss and measures the correctness of joint prediction for all labels; Hamming measure corresponds to Hamming loss and measures averaged number of correct predictions; Jaccard measure indicates how many labels are correctly predicted as 1 among
those equal 1 or predicted as 1.

\subsection{Artificial data}
\label{Artificial data}
\subsubsection{Correct specification}
The first two datasets are generated under correct specification, i.e. we generate data from the Ising model.
The data generation scheme is as follows. We fix the dimension of features $p=50$, sample size $n=1000$, number of labels $K=10$. Features are generated independently from Gaussian distribution with zero mean and identity covariance matrix.
Labels are generated from the following Ising model
\begin{equation} 
\label{ising3}
P(y_1,\ldots,y_K|\x)=\frac{1}{N(\x)}\exp\left[\sum_{k=1}^{K}\ba_k^{T}\x y_k+\sum_{k<j}\beta_{k,j}y_ky_j+\sum_{k<j}\bb_{k,j}^{T}\x y_ky_j\right],
\end{equation}
where $\ba_k=(a_{k,1},\ldots,a_{k,p})^{T}$ and $\bb_{k,j}=(b_{k,j,1},\ldots,b_{k,j,p})^{T}$ are $p$-dimensional parameter vectors. Model (\ref{ising3}), from which we generate data, incorporates all features $\x=(x_1,\ldots,x_p)^{T}$ simultaneously, not only single feature $x$ as in (\ref{ising1}) and (\ref{ising2}). 
We consider the following two settings.

\textit{ArtData1} Let $t=\{1,\ldots,10\}$ be a set of true relevant features. 
We set  $a_{k,s}=0.2$, for $s\in t$ and $a_{k,s}=0$ for $s\notin t$; $\beta_{k,j}=0.1$; $b_{1,2,s}=b_{2,1,s}=0.2$, for all $s\in t$, $b_{1,2,s}=b_{2,1,s}=0$, for $s\notin t$ and $b_{k,j,s}=0$, for $k,j\notin \{1,2\}$. 

\textit{ArtData2} Let $t=\{1,\ldots,10\}$ be a set of true relevant features. 
We set $a_{k,s}=0$ for all $s$; $\beta_{k,j}=0.1$; $b_{1,2,s}=b_{2,1,s}=0.2$, for all $s\in t$, $b_{1,2,s}=b_{2,1,s}=0$, for $s\notin t$ and $b_{k,j,s}=0$, for $k,j\notin \{1,2\}$.

In both datasets the first $10$ features are significant. In \textit{ArtData2}, significant features do not influence the labels directly but only the interactions between labels, which makes identification of them much more challenging. 
Given feature vector for $i$-th observation $\X^{(i)}$ and parameters defined  above, we use Gibbs sampling to generate labels, where we iteratively generate $\Y^{(i)}_k$, $k=1,\ldots,K$ from Bernoulli distribution with probability $P(\Y^{(i)}_k=1|\X^{(i)},\Y^{(i)}_{-k})$ and take the last value of the sequence. The number  of repetitions in Gibbs sampling is set to $30$. The above procedure is repeated for all $i=1,\ldots,n$.

\begin{figure}
\begin{center}$
\begin{array}{cc}
\includegraphics[scale=0.45]{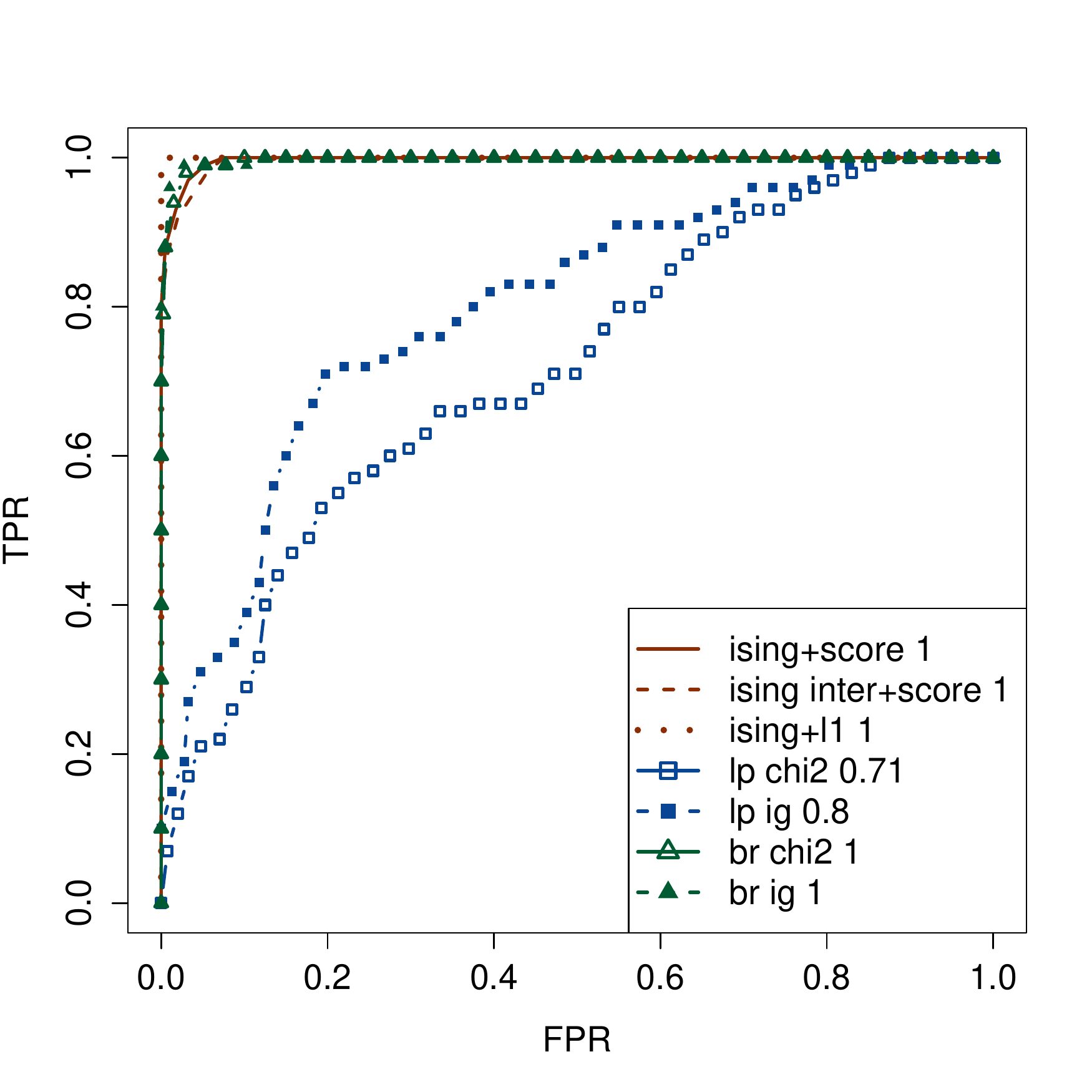} &
\includegraphics[scale=0.45]{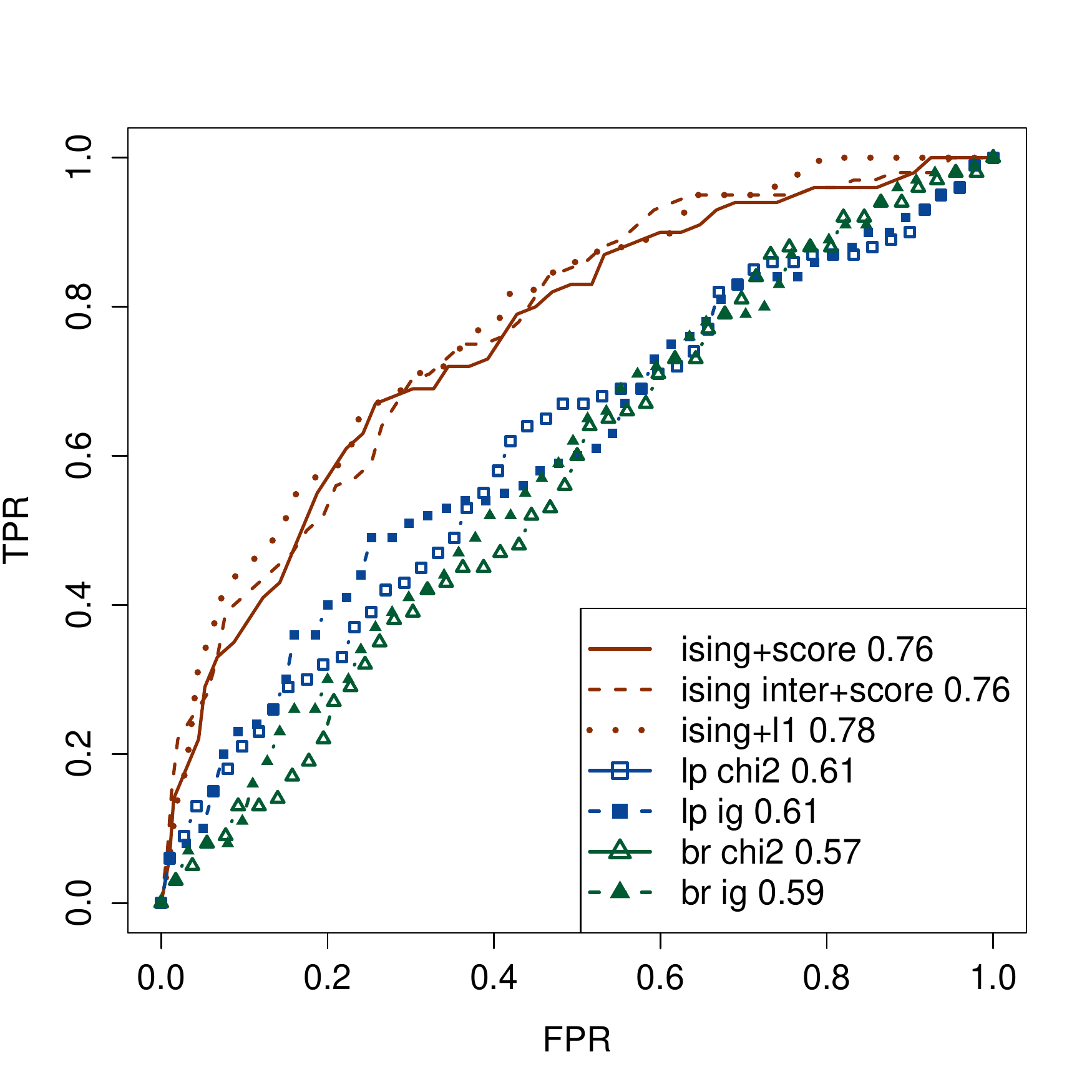} \\
\textrm{\textit{ArtData1}} & \textrm{\textit{\textit{ArtData2}}} \\
\end{array}$
\end{center}
\caption{ROC curves for \textit{ArtData1} and \textit{\textit{ArtData2}}. Numbers in legend correspond to AUC
($\textrm{AUC}=1$ corresponds to perfect ordering of features).}
\label{fig1}
\end{figure}

\begin{figure}
\begin{center}$
\begin{array}{cc}
\includegraphics[scale=0.45]{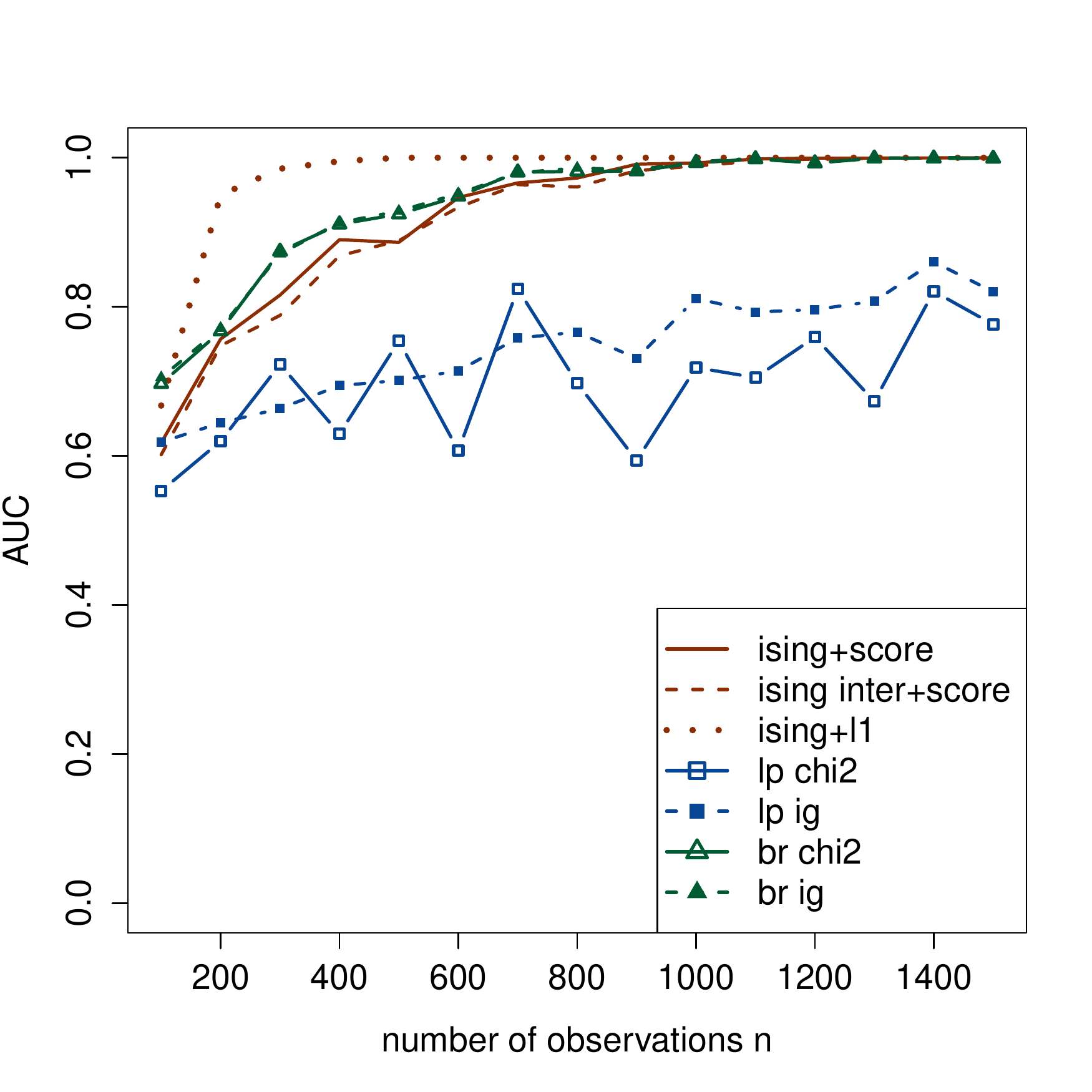} &
\includegraphics[scale=0.45]{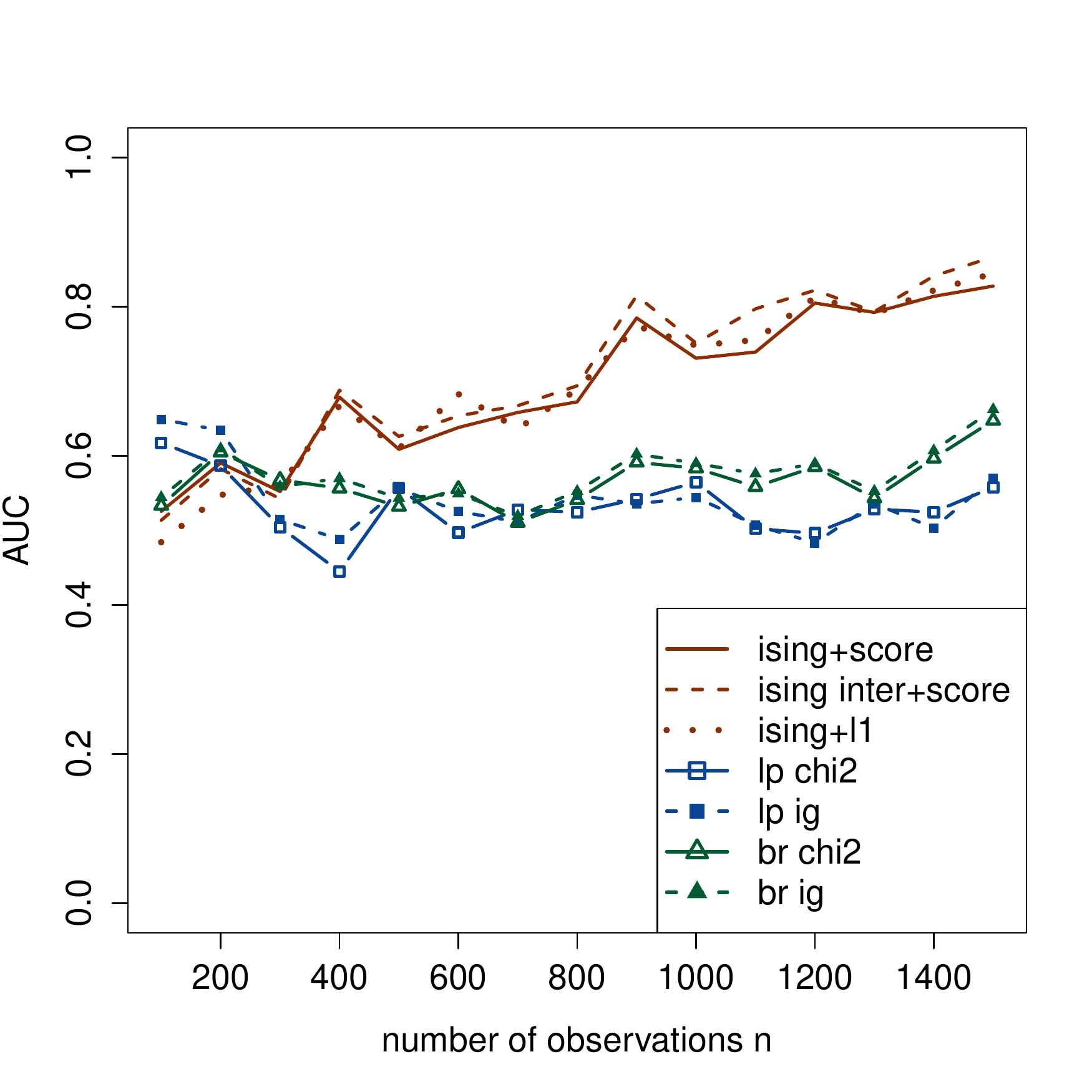} \\
\textrm{\textit{ArtData1}} & \textrm{\textit{\textit{ArtData2}}} \\
\end{array}$
\end{center}
\caption{AUC vs sample size $n$ for \textit{ArtData1} and \textit{\textit{ArtData2}} ($\textrm{AUC}=1$ corresponds to perfect ordering of features).}
\label{fig2}
\end{figure}
\subsubsection{Incorrect specification}
Obviously, data generation scheme presented in the previous section favours the methods based on the Ising model.
It is interesting to investigate the performance of discussed methods, under incorrect specification, i.e. when data generation scheme is not related to the Ising model. For this purpose we generated two datasets closely related to the ones proposed in \cite{DoquireVerleysen2013}. 
We consider the following two settings.

\textit{\textit{ArtData3}} We draw $5$ features $x_1,\ldots,x_5$ from uniform distribution on the $[0,1]$ interval.
Then we construct features $j=6,\ldots,10$ as follows: $x_6=(x_1-x_2)/2$, $x_7=(x_1+x_2)/2$, $x_8=x_3+0.1$, $x_9=x_4-0.2$ and $x_{10}=2x_5$. Then we add additional $40$ features from the uniform distribution on the $[0,1]$, which are independent from $x_1,\ldots,x_{10}$.
The multi-label output is build as follows 
\begin{equation}
\label{multioutput}
 \begin{cases} 
      y_1=1 & \textrm{if } x_1>x_2 \\
      y_2=1 & \textrm{if } x_4>x_3 \\
      y_3=1 & \textrm{if } y_1+y_2=1 \\
      y_4=1 & \textrm{if } x_5>0.8 \\
      y_k=0 & \textrm{otherwise } (k=1,\ldots,4)
   \end{cases}
\end{equation}

\textit{\textit{ArtData4}}
We draw $5$ features $x_1,\ldots,x_5$ from uniform distribution on the $[0,1]$ interval.
Let $\epsilon$ be drawn from Gaussian distribution with $0$ mean and standard deviation equal to $0.3$.
We construct features $j=6,\ldots,10$ as $x_6=x_1+\epsilon$,
$x_7=x_2+\epsilon$,
$x_8=x_3+\epsilon$,
$x_9=x_4+\epsilon$,
$x_{10}=x_5+\epsilon$.
Then we add additional $40$ features from uniform distribution on the $[0,1]$, which are independent from $x_1,\ldots,x_{10}$.
The multi-label output is build as follows 
\begin{equation}
\label{multioutput}
 \begin{cases} 
      y_1=1 & \textrm{if } x_1>x_2+\epsilon \\
      y_2=1 & \textrm{if } x_4>x_3+\epsilon \\
      y_3=1 & \textrm{if } y_1+y_2=1 \\
      y_4=1 & \textrm{if } x_5+\epsilon>0.8 \\
      y_k=0 & \textrm{otherwise } (k=1,\ldots,4)
   \end{cases}
\end{equation}

For \textit{ArtData3} features $t=\{1,\ldots,5,6,8,9,10\}$ are relevant, whereas for \textit{ArtData4}, $t=\{1,\ldots,10\}$.
\textit{ArtData3} was considered in \cite{DoquireVerleysen2013}, section 5.1. \textit{ArtData4} is modification of \textit{ArtData3}, in which  some noise $\epsilon$ is introduced. 
To make the task more challenging we increased the total number of features from $15$ \citep{DoquireVerleysen2013} to $p=50$ and decreased the number of observations from $n=1000$ to $n=100$.
Observe that in \textit{ArtData3} some features can by replaced by others, e.g. $y_1$ is determined by $2$ features: $x_1$ and $x_2$ or by a single feature $x_6$.

\begin{figure}
\begin{center}$
\begin{array}{cc}
\includegraphics[scale=0.45]{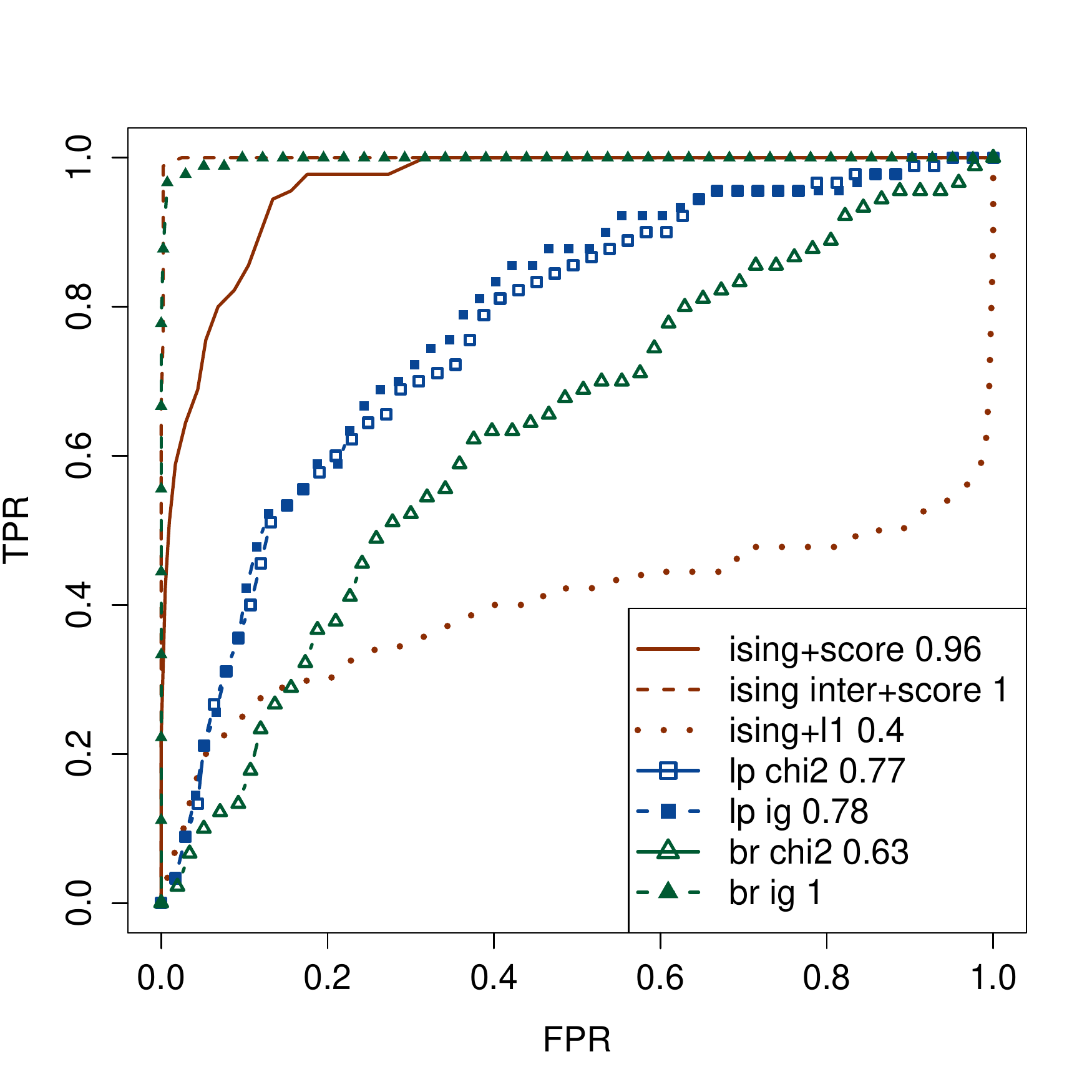} &
\includegraphics[scale=0.45]{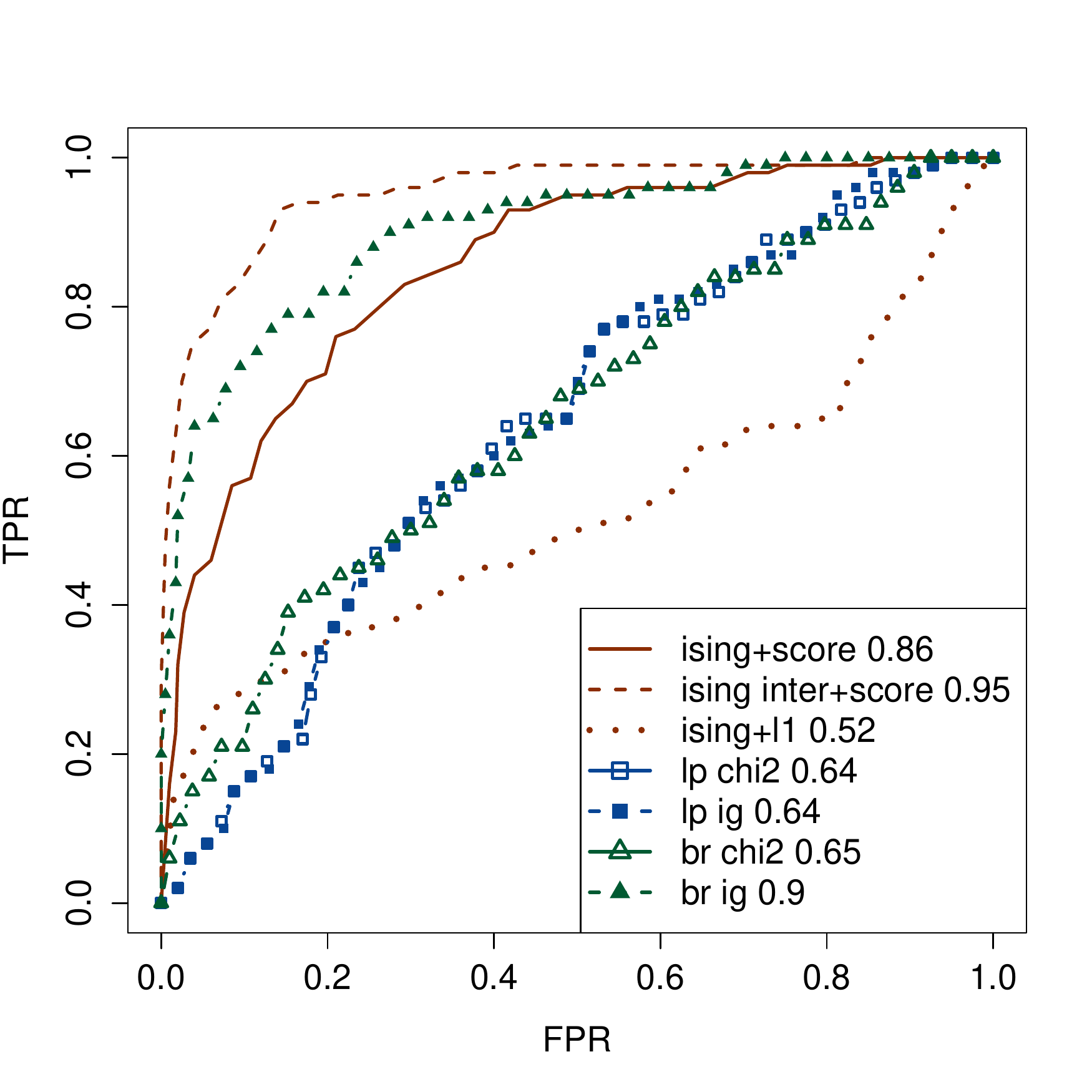} \\
\textrm{\textit{ArtData3}} & \textrm{\textit{ArtData4}} \\
\end{array}$
\end{center}
\caption{ROC curves for \textit{ArtData3} and \textit{ArtData4}. Numbers in legend correspond to AUC ($\textrm{AUC}=1$ corresponds to perfect ordering of features).}
\label{fig1a}
\end{figure}

\begin{figure}
\begin{center}$
\begin{array}{cc}
\includegraphics[scale=0.45]{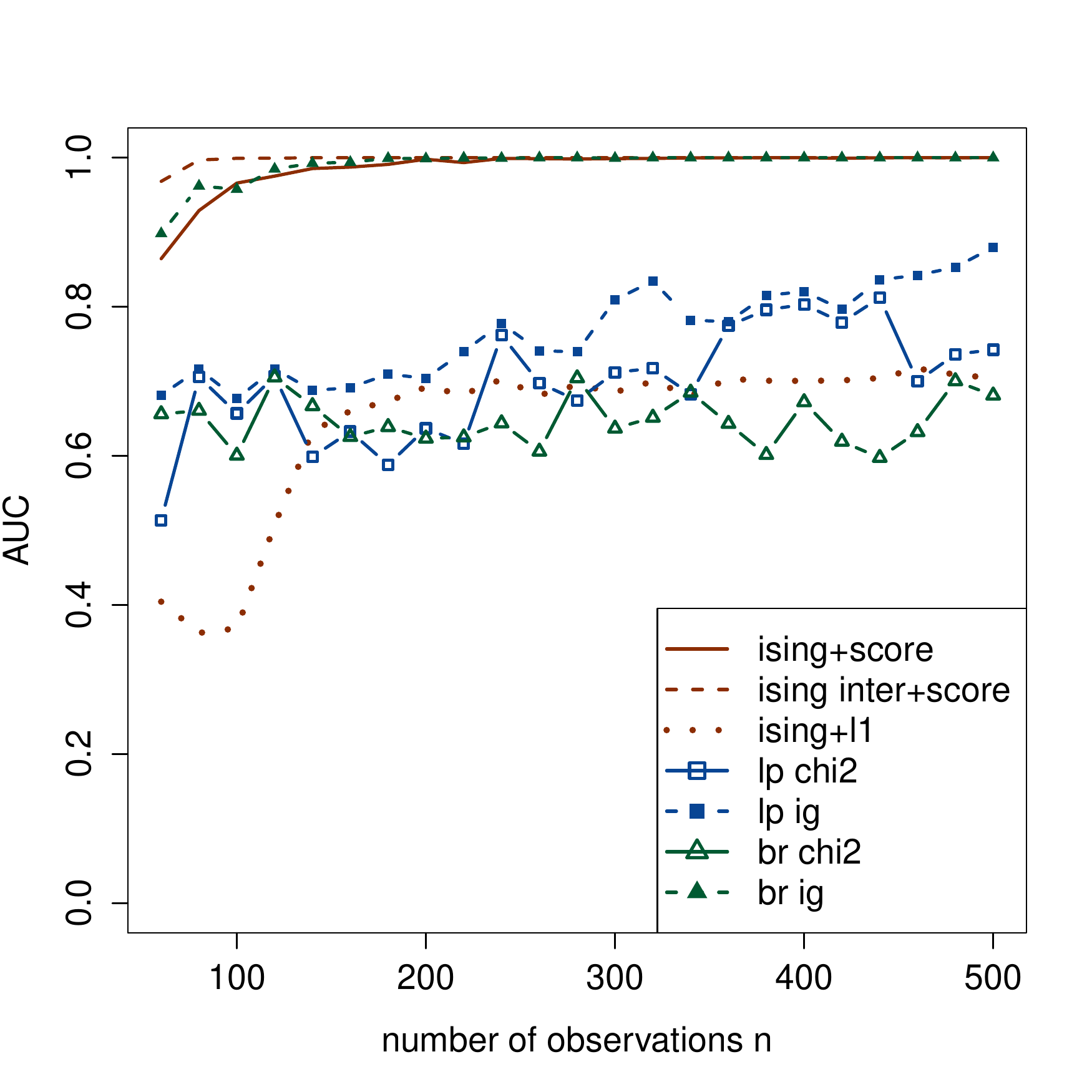} &
\includegraphics[scale=0.45]{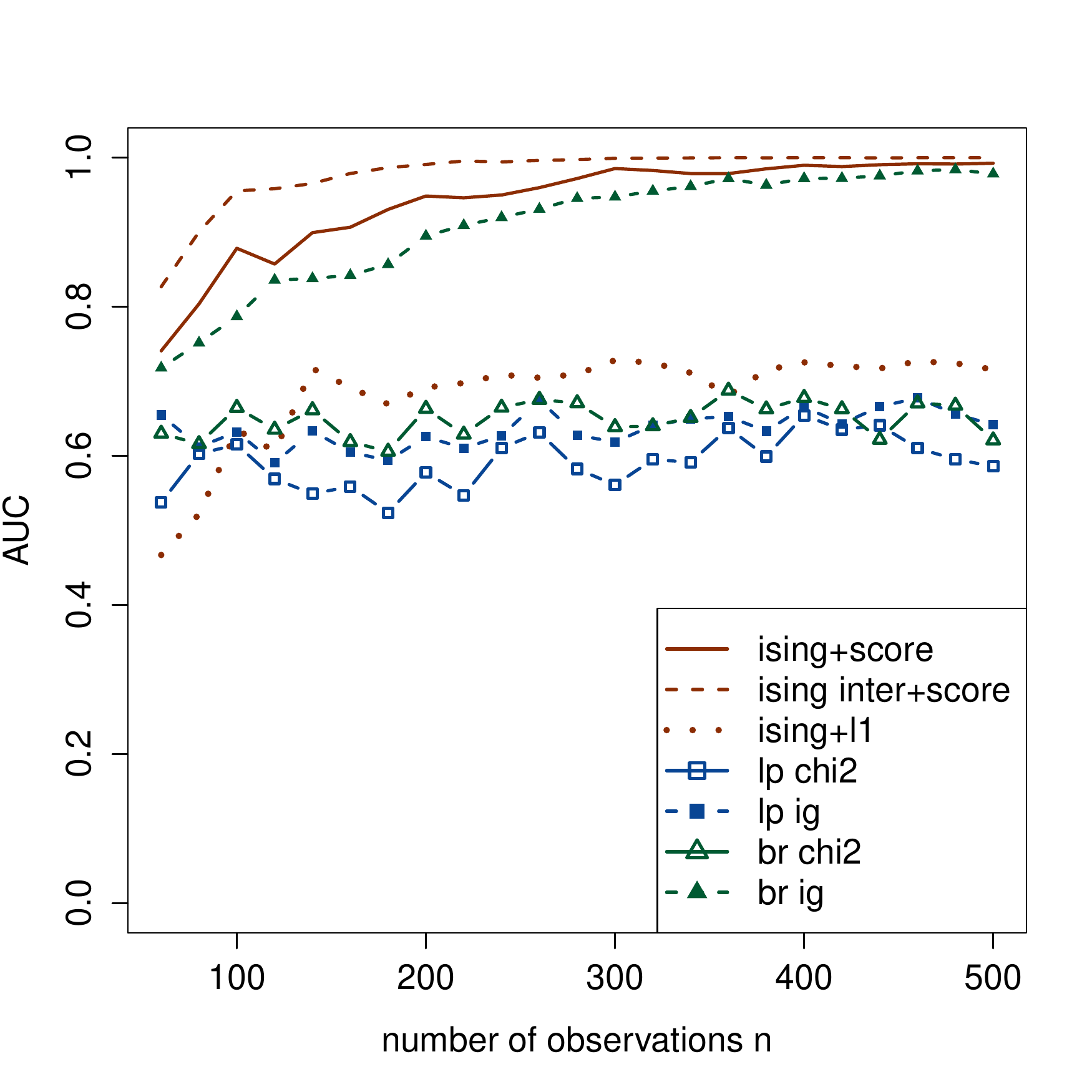} \\
\textrm{\textit{ArtData3}} & \textrm{\textit{ArtData4}} \\
\end{array}$
\end{center}
\caption{AUC vs sample size $n$ for \textit{ArtData3} and \textit{ArtData4} ($\textrm{AUC}=1$ corresponds to perfect ordering of features).}
\label{fig2b}
\end{figure}

\subsection{Experiment 1}
The aim of the first experiment was to study the performance of the proposed feature ranking methods on artificial datasets. In the case of artificial datasets, we can assess the quality of feature ranking methods directly as we know which features are significant, i.e. which features influence the joint probability of labels. For attractive visualization, we use ROC curves described in Section \ref{Evaluation measures}. Recall that desired feature ranking will result in ROC curve significantly above the diagonal and $AUC\approx 1$.

Figure \ref{fig1} shows ROC curves for \textit{ArtData1} and \textit{\textit{ArtData2}}.
It is seen that identification of true relevant features is much more difficult in the case of \textit{\textit{ArtData2}}, which is not surprising as the former one incorporates feature-dependent interactions.
For \textit{ArtData1}, all methods perform well, except methods based on LP.
The proposed methods outperform conventional ones significantly for dataset \textit{\textit{ArtData2}}. 
For \textit{ArtData1}, \textit{lp ig} works slightly better than \textit{lp chi2}, which agrees with the conclusions of other authors \citep{DoquireVerleysen2013}.
In addition, we investigated an effect of data size, i.e. how number of observations affects ranking of features. Figure
\ref{fig2} shows AUC with respect to the sample size $n$. The proposed methods \textit{ising+score} and \textit{ising inter+score} show good performance.
For \textit{ArtData1}, AUC increases with sample size for all methods, but the results for \textit{lp chi2} and \textit{lp ig} are less stable. It is seen that for $n$ large enough, the proposed methods find the correct ranking in all simulations. 
For \textit{ArtData2}, methods based on BR and LP perform poorly, even for large $n$. AUC for proposed methods increases, but a rate of growth is smaller than for \textit{ArtData1}.

Figure \ref{fig1a} shows ROC curves for \textit{ArtData3} and \textit{ArtData4}. It is seen that \textit{ArtData4} is more challenging than \textit{ArtData3} for all considered methods. For both datasets, \textit{ising inter+score} outperforms other approaches, \textit{br ig} is second best, whereas methods based on LP perform poorly. Surprisingly, \textit{ising+l1} works poorly for these two datasets. Effect of sample size is shown on Figure \ref{fig2b}. Large amount of data facilitates the task in the case of \textit{ArtData3}. For \textit{ArtData4}, the performance of LP does not improve even for large $n$.

\subsection{Real data}
We experimented with datasets from different applications.
Some datasets, considered in experiments, are publicly available at \url{http://mulan.sourceforge.net/datasets-mlc.html}.
Those are: scene, yeast, genbase, mediamill, medical, nus-wide, eurlex-dc and CAL500.
We also consider Twitter dataset, analysed in \cite{PrzybylaTeisseyre2015}.
The goal was to analyse a collection of tweets in English and discover its author’s gender, age and personality traits: extraversion, stability, agreeableness, conscientiousness and opennes. Since the original target variables are not binary, we created $7$ binary labels using the original target variables in the following way. We set $y_1=1$ if $gender=male$; $y_2=1$ if $age\leq 34$; $y_i=1$, for $i=3,4,5,6,7$ if the values of target variables extraversion, stability, agreeableness, conscientiousness and opennes are greater than their medians, respectively. The details of the data sets are summarized in Table \ref{tab1}. 
As the proposed methods (in particular \textit{ising inter+score}) are recommended for moderate number of labels, we limited the number of labels to $50$, for all datasets, by taking the most frequent ones. The number of features ranges from $55$ to $5000$. 

\begin{table}
\begin{tabular}{ccccc}
\hline
Dataset&Domain&$\#$observations ($n$)&$\#$features ($p$)&$\#$labels ($K$)\\
\hline
scene& 	images& 	2407& 	 	294& 	6\\
yeast& 	biology& 	2417& 	 	103& 	14 	\\
genbase 	&biology& 	662& 	 	1186& 	27\\
bibtex      &text& 	7395& 	 	1836& 	50\\
mediamill 	&video& 	10000& 	 	120& 	50\\
medical 	&video& 	978& 	 	1449 	& 	45\\
nus-wide	&video& 	10000& 	 	500& 	50\\
eurlex-dc 	&video& 	10000& 	 	5000& 	50\\
CAL500 	&video& 	502& 	 	68& 	50\\
Twitter 	&text& 	152& 	 	55& 	7\\
\hline
\end{tabular}
\caption{Basic statistics for the benchmark datasets.}
\label{tab1}
\end{table}

\subsection{Experiment 2}
The aim of the second experiment was to study the performance of the proposed feature ranking methods  on real-world datasets.
For real-world datasets, relevant features are not known in advance as they were for artificially built datasets. 
Thus the quality of feature ranking methods cannot be evaluated directly but can be measured by the performance of a classification model based on selected features.
We use feature selection procedure described in Section \ref{Feature selection}.
To assess the quality of the considered methods we use measures described in Section \ref{Evaluation measures}: Subset accuracy, Hamming measure and Jaccard measure. 
We also calculate the above measures for artificial datasets, described in Section \ref{Artificial data}.
 
The considered splitting  of the samples into training and test sets are the ones suggested on the website of the Mulan project. The training set is used to obtain ranking of features and than to build a multi-label classification model. For artificial datasets and Twitter dataset we randomly split the original data into training set $50\%$ and testing set $50\%$.
The validation set, needed to choose the final subset of features (see Section \ref{Feature selection}), is separated from the training set. The above performance measures are calculated on the test set. As a final classification model we use Classifier Chains \citep{Readetal2011, Dembczynskietal2010} with logistic model as a base learner.
Our choice is motivated by the fact that classifier chains are among the most frequently used and successful methods in multi-label learning (see \cite{Teisseyre2015} for theoretical properties of classifier chains combined with logistic regression).
We also experimented with nearest neighbour method, but the results were worse, even when the classifier was built using all possible features. Thus the results for nearest neighbour method are not presented.

The results are presented in Tables \ref{tab2}, \ref{tab3} and \ref{tab4}.
 We do not show the results for \textit{ArtData1} and \textit{ArtData2} as the performance of classifier chains was very poor in these cases, for all considered rankings. The reason is that these two datasets were too difficult for the final classifier, even when the final classifier was trained using all possible features.   
  Numbers printed in bold pertain to maximal values in rows (the winning method). The last row contains ranks, averaged over all datasets.
Looking at the averaged ranks for Hamming and Jaccard measures, the proposed methods outperform the conventional ones, although the differences between measures are quite small. For Subset measure, \textit{ising+l1} has the highest averaged rank. Surprisingly, \textit{br ig} outperforms \textit{lp ig}, which can be a consequence of the large number of classes produced by LP transformation. Probably LP transformation combined with some pruning strategies would improve the results.
 
To analyse the results thoroughly, we followed the two-step statistical procedure recommended in \cite{Demsar2006}. In the first step we use the Friedman test of the null hypothesis that all methods have equal performance. Friedman test is based on averaged ranks. When null hypothesis is rejected a post-hoc test is used to compare methods in a pairwise way. We use Conover post-hoc test \citep{Conover1980}. Friedman test suggests that there are significant differences (at a standard significance level $0.05$) between methods for Hamming measure (p-value=$0.0009$) and Subset measure (p-value=$0.0002$), whereas the differences  for Jaccard measure are not statistically significant (p-value=$0.082$). Thus, we performed the post-hoc tests for Hamming and Subset measures. Results of pairwise comparisons for Hamming and Subset measures are shown in Tables \ref{tab5} and \ref{tab6}. It is seen that there are significant differences between the proposed methods and methods based on LP transformation. The differences between the proposed methods and the ones based on BR transformation are not significant. This may be due to the fact that the number of datasets included in experiments is quite limited from a statistical point of view. The overall performance of the proposed approaches is quite promising.    
    
\begin{table}[ht!]
\centering
\begin{tabular}{llllllll}
  \hline
Dataset & ising+score & ising inter+score & ising+l1 & lp chi2 & lp ig & br chi2 & br ig \\ 
  \hline
scene & 0.839 & \textbf{0.845} & 0.844 & 0.826 & 0.815 & 0.825 & 0.813 \\ 
  yeast & \textbf{0.783} & 0.783 & 0.783 & 0.781 & 0.781 & 0.783 & 0.781 \\ 
  genbase & 0.998 & 0.994 & \textbf{0.999} & 0.968 & 0.996 & 0.996 & 0.997 \\ 
  bibtex & 0.967 & 0.968 & 0.960 & 0.961 & 0.968 & \textbf{0.968} & 0.967 \\ 
  enron & \textbf{0.883} & 0.883 & 0.877 & 0.847 & 0.883 & 0.857 & 0.857 \\ 
  mediamill & 0.869 & 0.869 & 0.870 & 0.868 & 0.868 & \textbf{0.871} & 0.869 \\ 
  medical & 0.964 & 0.975 & 0.976 & 0.974 & 0.974 & 0.969 & \textbf{0.976} \\ 
  nus-wide & 0.930 & \textbf{0.931} & 0.929 & 0.929 & 0.929 & 0.930 & 0.930 \\ 
  eurlex-dc & 0.978 & 0.978 & 0.976 & 0.974 & 0.976 & 0.975 & \textbf{0.979} \\ 
  CAL500 & 0.560 & 0.558 & 0.556 & 0.553 & 0.554 & \textbf{0.564} & 0.564 \\ 
  twitter & 0.665 & \textbf{0.680} & 0.613 & 0.660 & 0.660 & 0.618 & 0.648 \\ 
  ArtData3 & 0.683 & 0.684 & 0.671 & 0.661 & 0.661 & 0.635 & \textbf{0.690} \\ 
  ArtData4 & 0.591 & 0.607 & 0.604 & 0.567 & 0.567 & 0.585 & \textbf{0.620} \\ 
    \hline
  Average rank & 4.92&    \textbf{5.23}&    4.30&    2.07&    2.88&    3.76&    4.81 \\  
   \hline
\end{tabular}
\caption{Hamming measure. The
average rank is the average of the ranks across all data sets. Numbers in bold pertain to maximal values in rows.}
\label{tab2}
\end{table}

\begin{table}[ht!]
\centering
\begin{tabular}{llllllll}
  \hline
Dataset & ising+score & ising inter+score & ising+l1 & lp chi2 & lp ig & br chi2 & br ig \\ 
  \hline
scene & 0.486 & 0.508 & \textbf{0.512} & 0.473 & 0.421 & 0.452 & 0.418 \\ 
  yeast & 0.179 & \textbf{0.183} & 0.179 & 0.178 & 0.180 & 0.176 & 0.177 \\ 
  genbase & 0.960 & 0.915 & \textbf{0.980} & 0.613 & 0.940 & 0.945 & 0.940 \\ 
  bibtex & \textbf{0.534} & 0.510 & 0.410 & 0.426 & 0.516 & 0.517 & 0.506 \\ 
  enron & 0.043 & 0.043 & \textbf{0.057} & 0.016 & 0.041 & 0.017 & 0.012 \\ 
  mediamill & 0.121 & 0.119 & \textbf{0.126} & 0.118 & 0.122 & 0.122 & 0.116 \\ 
  medical & 0.451 & 0.640 & \textbf{0.674} & 0.634 & 0.631 & 0.540 & 0.656 \\ 
  nus-wide & 0.299 & \textbf{0.299} & 0.299 & 0.288 & 0.294 & 0.297 & 0.298 \\ 
  eurlex-dc & 0.585 & 0.579 & 0.543 & 0.495 & 0.524 & 0.512 & \textbf{0.605} \\ 
  CAL500 & 0.000 & 0.000 & 0.000 & 0.000 & 0.000 & 0.000 & 0.000 \\ 
  twitter & 0.066 & \textbf{0.092} & 0.092 & 0.053 & 0.053 & 0.079 & 0.079 \\ 
  ArtData3 & 0.030 & 0.030 & 0.030 & 0.010 & 0.010 & 0.010 & 0.010 \\ 
  ArtData4 & 0.010 & 0.010 & 0.010 & 0.000 & 0.000 & 0.000 & 0.010 \\ 
  \hline
 Average rank & 4.80  &  5.07  &  \textbf{5.61} &   2.30 &   3.34  &  3.38 &   3.46\\ 
   \hline
\end{tabular}
\caption{Subset measure. The
average rank is the average of the ranks across all data sets. Numbers in bold pertain to maximal values in rows.}
\label{tab3}
\end{table}
   
\begin{table}[ht!]
\centering
\begin{tabular}{llllllll}
  \hline
Dataset & ising+score & ising inter+score & ising+l1 & lp chi2 & lp ig & br chi2 & br ig \\ 
  \hline
scene & 0.523 & 0.544 & \textbf{0.545} & 0.497 & 0.454 & 0.485 & 0.451 \\ 
  yeast & 0.472 & 0.471 & 0.471 & 0.471 & \textbf{0.472} & 0.469 & 0.470 \\ 
  genbase & 0.983 & 0.953 & \textbf{0.993} & 0.659 & 0.966 & 0.970 & 0.969 \\ 
  bibtex & \textbf{0.154} & 0.123 & 0.000 & 0.026 & 0.136 & 0.135 & 0.116 \\ 
  enron & 0.333 & 0.327 & 0.307 & 0.331 & \textbf{0.343} & 0.311 & 0.313 \\ 
  mediamill & \textbf{0.419} & 0.415 & 0.411 & 0.414 & 0.418 & 0.419 & 0.412 \\ 
  medical & 0.466 & 0.666 & \textbf{0.717} & 0.684 & 0.643 & 0.556 & 0.670 \\ 
  nus-wide & 0.067 & \textbf{0.073} & 0.064 & 0.048 & 0.060 & 0.064 & 0.068 \\ 
  eurlex-dc & 0.077 & 0.071 & 0.033 & 0.004 & 0.016 & 0.002 & \textbf{0.100} \\ 
  CAL500 & \textbf{0.382} & 0.379 & 0.380 & 0.378 & 0.377 & 0.382 & 0.378 \\ 
  twitter & 0.600 & \textbf{0.615} & 0.577 & 0.589 & 0.589 & 0.583 & 0.591 \\ 
  ArtData3 & 0.507 & 0.507 & 0.495 & 0.481 & 0.481 & 0.431 & \textbf{0.520} \\ 
  ArtData4 & 0.371 & 0.387 & 0.429 & 0.371 & 0.371 & 0.390 & \textbf{0.430} \\ 
\hline  
 Average rank &     \textbf{5.23}&    4.76&    3.92&    2.88&    3.65&    3.30&    4.23    \\
   \hline
\end{tabular}
\caption{Jaccard measure. The
average rank is the average of the ranks across all data sets. Numbers in bold pertain to maximal values in rows.}
\label{tab4}
\end{table}

\begin{table}[ht!]
\centering
\begin{tabular}{llllllll}
  \hline
 & ising+score & ising inter+score & ising+l1 & lp chi2 & lp ig & br chi2 \\ 
  \hline
ising inter+score & 1.000 &  &  &  &  &  \\ 
  ising+l1 & 1.000 & 0.885 &  &  &  &  \\ 
  lp chi2 & 0.000 & 0.000 & 0.002 &  &  &  \\ 
  lp ig & 0.007 & 0.001 & 0.142 & 1.000 &  &  \\ 
  br chi2 & 0.438 & 0.128 & 1.000 & 0.042 & 0.903 &  \\ 
  br ig & 1.000 & 1.000 & 1.000 & 0.000 & 0.012 & 0.636 \\ 
   \hline
\end{tabular}
\caption{P-values of post-hoc Conover test, used to compare all classifiers against each other with respect to Hamming measure.}
\label{tab5}
\end{table}
\begin{table}[ht!]
\centering
\begin{tabular}{llllllll}
  \hline
 & ising & ising inter+score & ising+l1 & lp chi2 & lp ig & br chi2 \\ 
  \hline
ising inter+score & 1.0000 &  &  &  &  &  \\ 
  ising+l1 & 0.6638 & 1.0000 &  &  &  &  \\ 
  lp chi2 & 0.0001 & 0.0000 & 0.0000 &  &  &  \\ 
  lp ig & 0.0556 & 0.0136 & 0.0004 & 0.2898 &  &  \\ 
  br chi2 & 0.0634 & 0.0162 & 0.0005 & 0.2769 & 1.0000 &  \\ 
  br ig & 0.0882 & 0.0242 & 0.0008 & 0.2151 & 1.0000 & 1.0000 \\ 
   \hline
\end{tabular}
\caption{P-values of post-hoc Conover test, used to compare all classifiers against each other with respect to Subset measure.}
\label{tab6}
\end{table}

\subsection{Computational efficiency}
The proposed procedures (\textit{ising+score} and \textit{ising inter+score}) require fitting $K$ logistic models (using maximum likelihood method) with $K-1$ input features each in the first step. This step is computationally fast for moderate number of labels $K$.
The first step is a price for taking into account labels, when assessing the relevance of features. The second step includes computation of the score statistic, which is very simple in the case of \textit{ising+score}: the most expensive operation is a computation of scalar products between a given feature and appropriately weighted $K-1$ labels (see definitions of $\B(\hthetab_k)$ and $\C(\hthetab_k)$ in Section \ref{Ising model with constant interaction terms}). In the case of \textit{ising inter+score} we compute the score statistic for a given variable and for products of a given variable with $K-1$ labels. This requires much more operations and can be seen as a price for taking into account feature-dependent interactions  between labels. The third procedure \textit{ising+l1} is the most computationally expensive as it requires fitting $K$ logistic models with $K+p-1$ input features, each, using $l_1$ regularization. To solve the problem we use Cyclic Coordinate Descent (CCD) algorithm proposed by \cite{FriedmanHastieTibshirani2010}. CCD iterates over all $p+K-1$ variables until convergence, the maximal number of iterations in our experiment is set to $100$.
Figure \ref{fig3} shows how the computational time depends on the number of features $p$ (a) and the number of labels $K$ (b). The experiment was carried out on Work Station with Intel Core i5-3220M CPU, 2.60GHz, 12 GB RAM.
All considered methods have been implemented by us in R language; the only exception is an information gain, taken from R package \texttt{FSelector} \citep{FSelector}.
We generated artificial data in such a way that features were drawn from standard Gaussian distribution whereas labels were generated from binomial distribution, number of observations was $n=500$. The curves are smoothed over $5$ simulations. In the case of Figure \ref{fig3} (a) we set $K=5$ and in the case of Figure \ref{fig3} (b) we set $p=50$.
Figure \ref{fig3} (a) indicates that all methods depend linearly on the number of features, except \textit{ising+l1}, which is a price for incorporating all features simultaneously. 
Computational times are larger for methods based on information gain, which is not a surprise as estimation of information gain is more challenging than computation of the chi-squared statistic or the score statistic. The proposed method \textit{ising+score} is among the fastest ones. 

Figure \ref{fig3} (b) indicates that for \textit{ising inter+score} the dependence between computational time and number of labels is quadratic, which is obvious as this method takes into account feature-dependent interactions between labels. Thus this method can be recommended for limited number of labels.

Finally, let us mention that the proposed methods as well as the conventional ones can be computed in parallel easily (the parallel versions were used in \textit{Experiment 1} and \textit{Experiment 2}).

\begin{figure}
\begin{center}$
\begin{array}{cc}
\includegraphics[scale=0.45]{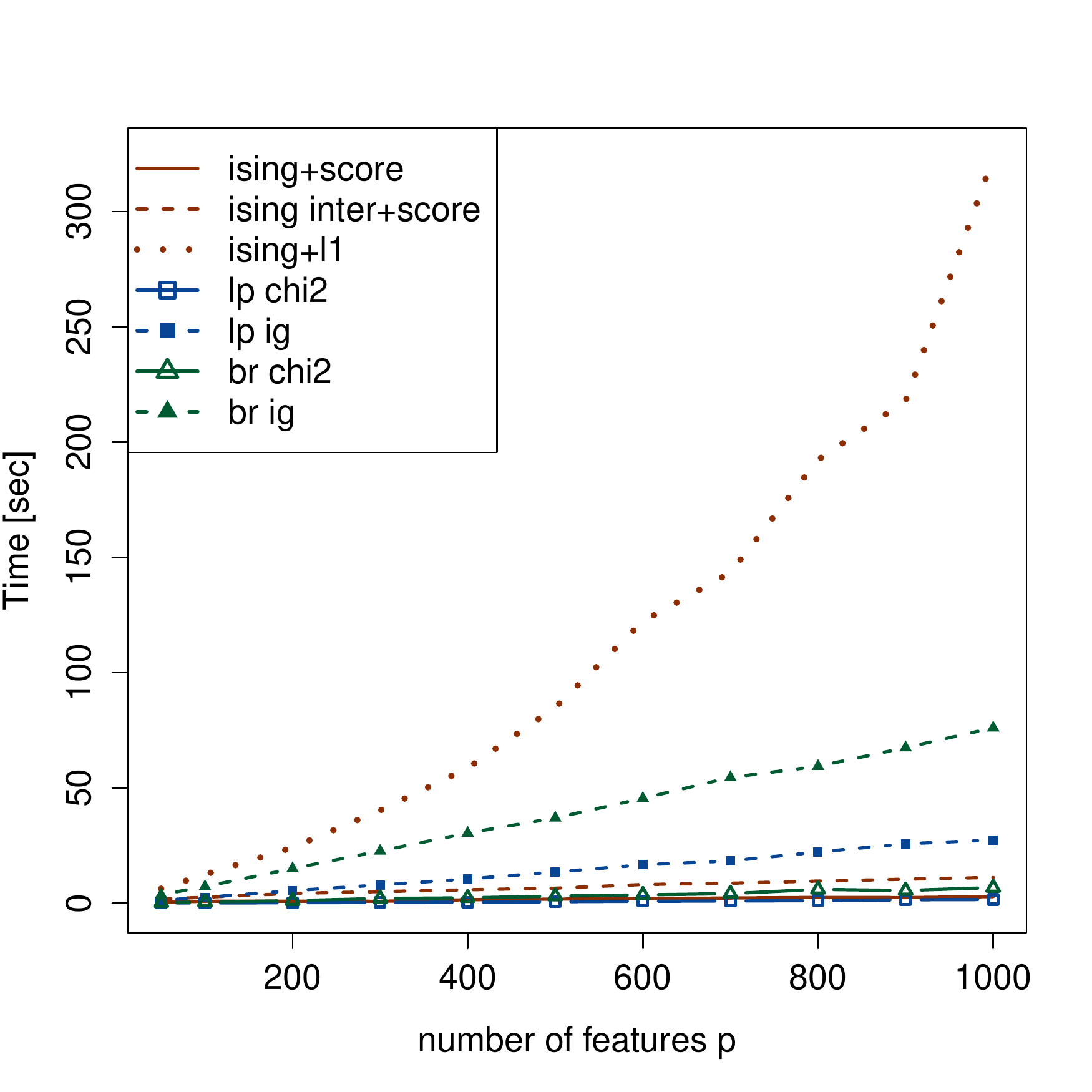} &
\includegraphics[scale=0.45]{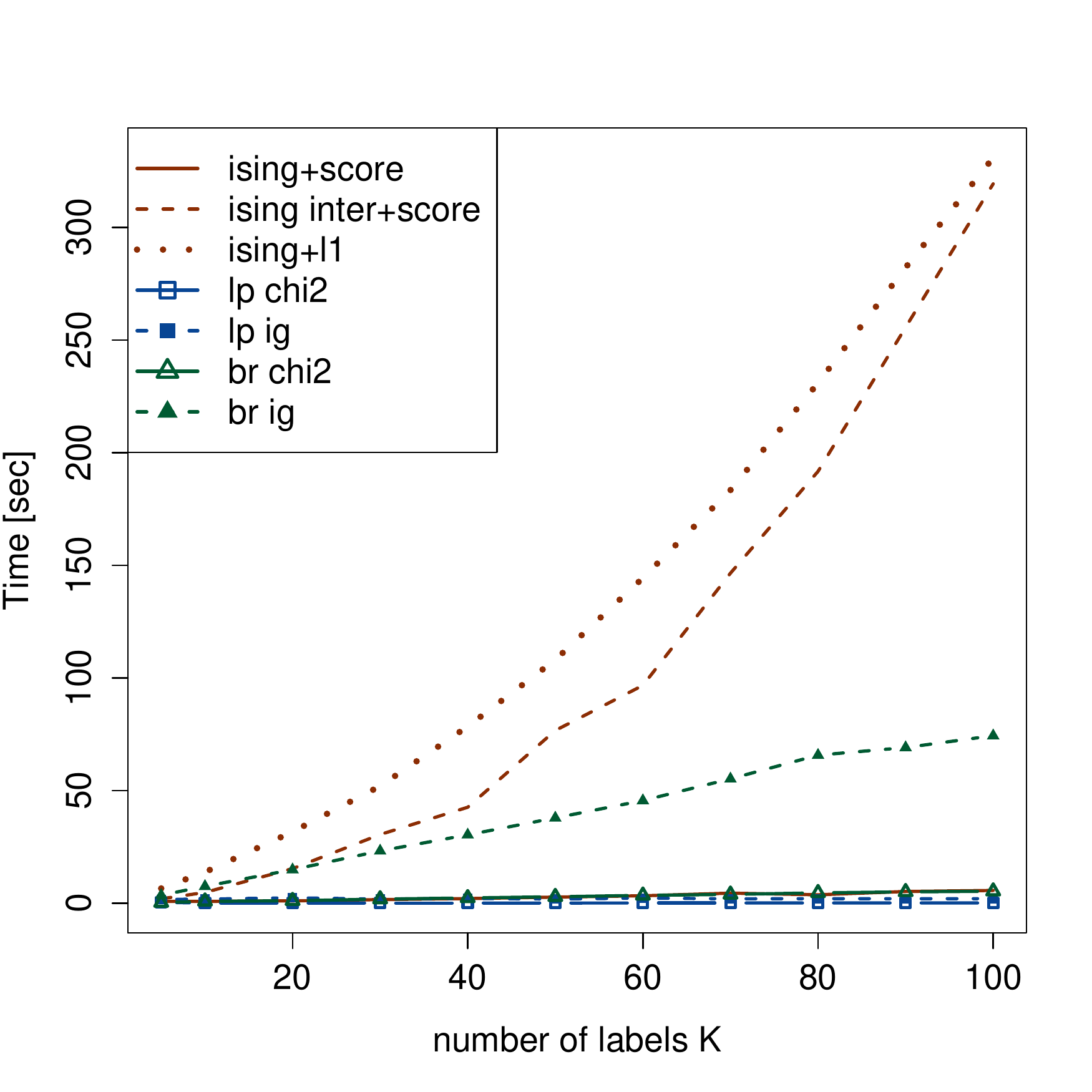} \\
(a) & (b) \\
\end{array}$
\end{center}
\caption{Computational times (in seconds) for considered FR procedures with respect to the number of features $p$ (a) and number of labels $K$ (b).}
\label{fig3}
\end{figure}

\section{Conclusions and future work}
\label{Conclusions}
In this paper we propose a novel method for feature ranking in the multi-label setting.
The method consists of two steps. In the first step we fit the Ising model using only labels. In the second step
we test how much adding a single feature affects the initial network. It is shown that the first step can be performed simply by fitting $K$ logistic models. The second step is based on the score statistic, which is very efficient in this case and allows to test a significance of added features very quickly which is crucial for FR methods. The final feature importance measure is based on averaged values of score statistics. We provide theoretical justification of the Ising model and the score statistic.
We also consider FR procedure based on fitting the Ising model using $l_1$ regularized logistic regressions. This version incorporates all features simultaneously, but it is computationally expensive for large number of labels.
The experiments carried out on artificial and real data show that the proposed methods can outperform the conventional ones. Thus, they can be recommended, especially for datasets with moderate number of labels and large number of features. 

Future work should include generalization of the proposed approach to more general Markov Networks. In particular one can consider a generalized Ising model in which some non-linear functions of features are used instead of linear combinations. The major problem associated with more general Markov Networks is how to estimate the parameters and how to test the significance of features efficiently.

In our procedure we test the significance of feature $x_j$ in model $y_k\sim \y_{-k},x_j$ using logistic regression. This suggests that other classification models can be used, e.g. decision trees which allow to discover non-linear dependencies. Note however that the main problem with decision trees (and some other classification models) is how to verify the significance of $x_j$ in model $y_k\sim \y_{-k},x_j$, possibly without refitting the model when adding $x_j$ (as in our procedure). The modification of the score statistic $u_{k}(x_j)$ would be necessary for other models.

The other interesting question is how to choose the final subset of features having their ordering. Here we use a simple approach based on validation set. It would be worthwhile to have a more sophisticated method, which does not require separating a validation set.

The limitation of many FR methods is that the features are accessed individually and the possible redundancy as well as joint relevance of features is not taken into account. On the other hand the methods, which take into account all features simultaneously (e.g. \textit{ising+l1}), are usually slow for large number of features or labels.
It would be interesting to combine methods which assess the individual relevance of the features (like \textit{ising+score}) with those taking into account all features simultaneously (like \textit{ising+l1}).
This could be done by applying two-step procedure in which \textit{ising+score} is used at first to filter out least significant features and then \textit{ising+l1} is launched on the remaining set of features.

 
\section*{Acknowledgements}
I am grateful to the Associate Editor and anonymous reviewers
for their valuable comments that helped to improve the initial
version of this paper.

Research of Pawe{\l} Teisseyre  was supported by the European Union from resources of the European Social Fund within project 'Information technologies: research and their interdisciplinary applications' POKL.04.01.01-00-051/10-00.

\appendix 
\section{}
\label{Appendix}
\subsection{Proof of (\ref{logodds})}
\label{Appendix A}
For simplicity, we write $x$ instead of $x_j$.
Using the definition of conditional probabilities we can write
\begin{eqnarray*}
&&
\frac{P(y_k=1|x,\y_{-k})}{P(y_k=0|x,\y_{-k})}=
\frac{P(y_k=1,x,\y_{-k})/P(x,\y_{-k})}{P(y_k=0,x,\y_{-k})/P(x,\y_{-k})}=
\frac{P(y_k=1,x,\y_{-k})}{P(y_k=0,x,\y_{-k})}=
\cr
&&
\frac{\exp[a_kx+\sum_{l:l\neq k}a_lxy_l+\sum_{s<l:s,l\neq k}\beta_{s,l}y_sy_l+\sum_{l:l\neq k}\beta_{l,k}y_l]}{
\exp[\sum_{l:l\neq k}a_lxy_l+\sum_{s<l:s,l\neq k}\beta_{s,l}y_sy_l]
}=
\exp[a_kx+\sum_{l:l\neq k}\beta_{l,k}y_l],
\end{eqnarray*}
which ends the proof.
\subsection{Proof of Proposition \ref{Proposition 1}}
\label{Appendix B}
We can write
\begin{eqnarray*}
&&
H_g(\y|x)=-\sum_{\y}g(\y|x)\log(g(\y|x))=
-\sum_{\y}g(\y|x)\log\left[\frac{g(\y|x)}{p(\y|x)}p(\y|x)\right]=
\cr
&&
-KL(g,p)-\sum_{\y}g(\y|x)\log(p(\y|x))\leq 
-\sum_{\y}g(\y|x)\log(p(\y|x)),
\end{eqnarray*}
where $KL(g,p)$ is a Kullback-Leibner divergence between $g$ and $p$ and the last inequality follows from $KL(g,p)\geq 0$ (see e.g. Theorem 8.6.1 in \cite{Cover2006}).
Using the definition of $p$ and the fact that both $p$ and $g$ must satisfy constraints (\ref{constr1}) and (\ref{constr2}), we obtain
\begin{eqnarray*}
&&
-\sum_{\y}g(\y|x)\log(p(\y|x))=
-\sum_{\y}g(\y|x)\left[-\log(Z(x))+\sum_{k=1}^{K}a_kxy_k+\sum_{k<j}(\beta_{k,j}+b_{k,j}x)y_ky_j\right]=
\cr
&&
-\sum_{\y}p(\y|x)\left[-\log(Z(x))+\sum_{k=1}^{K}a_kxy_k+\sum_{k<j}(\beta_{k,j}+b_{k,j}x)y_ky_j\right]=
-\sum_{\y}p(\y|x)\log(p(\y|x)),
\end{eqnarray*}
which ends the proof.

\subsection{Auxiliary facts}
Let us introduce some additional notation. In the following $||\w||$ will denote Euclidean norm of vector $
\w$ and $||\w||_{\infty}$ maximum norm. In addition $\lambda_{j}(\A)$ denotes $j$-th eigenvalue of matrix $\A$, $\lambda_{\min}(\A)$ ($\lambda_{\max}(\A)$) its minimal (maximal) eigenvalue. 

Let $l(\cdot)$ be log-likelihood function based on larger model $y_k\sim\y_{-k},x$. 
Let $\s(\cdot)$ be gradient of $l(\cdot)$. 
Recall that $\Z=(\Y_{-k},\X_j)$ is $n\times K$ matrix.
It is easy to calculate that $\s(\thetab_k)=\Z^{T}(\Y_k-\p(\thetab_k))$,  $\p(\thetab_k)=(p^{(1)}(\thetab_k),\ldots,p^{(n)}(\thetab_k))^{T}$.
Since coordinates of $\s(\thetab_k)$ are sums whose summands are bounded by $L$, it follows from Hoeffding inequality that 
\begin{equation}
\label{Hoefding}
P(||\s(\thetab_k)||_{\infty}>\delta|\Z)\leq K\exp\left[-\frac{2\delta^2}{nL^2}\right],
\end{equation}
for any $\delta>0$.

For logistic regression, Hessian matrix of $l(\cdot)$ is equal $-\I(\cdot)$, where 
$\I(\cdot)=\Z^{T}\W(\cdot)\Z$.

\begin{Lemma}
\label{Lemma1}
Assume that $|(\w-\thetab_k)^{T}\Z^{(i)}|\leq 1$, for some vector $\w\in R^{K}$. Then 
\begin{equation*}
p^{(i)}(\w)(1-p^{(i)}(\w))>e^{-3}p^{(i)}(\thetab_k)(1-p^{(i)}(\thetab_k)).
\end{equation*}
\end{Lemma}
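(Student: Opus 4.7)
\textbf{Proof plan for Lemma \ref{Lemma1}.}
The plan is to reduce the bound to a one-dimensional fact about the function $\sigma(z)(1-\sigma(z))$ where $\sigma(z)=e^z/(1+e^z)$ is the logistic link, and then invoke the mean value theorem. First I would set $z_0 := \thetab_k^{T}\Z^{(i)}$ and $z_1 := \w^{T}\Z^{(i)}$, so that $p^{(i)}(\thetab_k) = \sigma(z_0)$, $p^{(i)}(\w) = \sigma(z_1)$, and the hypothesis becomes simply $|z_1 - z_0|\leq 1$.

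Next, introduce
\[
h(z) := \log\bigl[\sigma(z)(1-\sigma(z))\bigr] \;=\; z - 2\log(1+e^{z}),
\]
so that proving the Lemma reduces to showing $h(z_1) - h(z_0) > -3$. I would differentiate to obtain
\[
h'(z) \;=\; 1 - 2\sigma(z),
\]
and observe that since $\sigma(z)\in(0,1)$ for every real $z$, we have $|h'(z)| < 1$ uniformly. By the mean value theorem applied on the interval with endpoints $z_0,z_1$, this yields
\[
|h(z_1) - h(z_0)| \;\leq\; \sup_{z}|h'(z)|\cdot|z_1-z_0| \;\leq\; |z_1-z_0| \;\leq\; 1.
\]

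Finally, exponentiating gives $\sigma(z_1)(1-\sigma(z_1)) \geq e^{-1}\sigma(z_0)(1-\sigma(z_0)) > e^{-3}\sigma(z_0)(1-\sigma(z_0))$, which is exactly the stated inequality (the constant $e^{-3}$ in the conclusion is in fact loose, which is presumably convenient for absorbing additional slack in the proof of Theorem \ref{Theorem 1}). There is no real obstacle here: the only thing to be careful about is keeping the direction of the strict inequality correct, which is ensured by the strict bound $e^{-1} > e^{-3}$ rather than by any delicate property of $h'$.
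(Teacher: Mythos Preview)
Your argument is correct, and it actually yields a sharper constant than the statement requires: you obtain $e^{-1}$ in place of $e^{-3}$. The paper takes a different, purely algebraic route. Writing $z_0=\thetab_k^{T}\Z^{(i)}$ and $z_1=\w^{T}\Z^{(i)}$ as you do, the paper computes the ratio directly,
\[
\frac{p^{(i)}(\w)(1-p^{(i)}(\w))}{p^{(i)}(\thetab_k)(1-p^{(i)}(\thetab_k))}
= e^{z_1-z_0}\left[\frac{1+e^{z_0}}{1+e^{z_1}}\right]^{2},
\]
and then bounds the two factors separately: the first factor is at least $e^{-1}$ because $z_1-z_0\geq -1$, while the bracketed term is rewritten as $(e^{-z_0}+1)/(e^{-z_0}+e^{z_1-z_0})\geq (e^{-z_0}+1)/(e^{-z_0}+e)$, which is then shown to exceed $e^{-1}$. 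Multiplying gives the stated $e^{-3}$.

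The contrast is that the paper splits the ratio into two pieces and bounds each crudely, losing a factor of $e$ on each of the two bracketed terms, whereas your log-derivative/MVT argument treats $z\mapsto\log[\sigma(z)(1-\sigma(z))]$ as a single $1$-Lipschitz function and so loses only one factor of $e$ in total. Your approach is shorter and sharper; the paper's approach is more hands-on but arrives at exactly the constant it needs downstream, so nothing is lost in the application to Theorem~\ref{Theorem 1}.
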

\begin{proof}
Observe that for $\w$ such that $|(\w-\thetab_k)^{T}\Z^{(i)}|\leq 1$ we have
\begin{equation*}
\frac{p^{(i)}(\w)(1-p^{(i)}(\w))}{p^{(i)}(\thetab_k)(1-p^{(i)}(\thetab_k))}=
e^{(\w-\thetab_k)^{T}\Z^{(i)}}\left[\frac{1+e^{\thetab_k^{T}\Z^{(i)}}}{1+e^{\w^{T}\Z^{(i)}}}\right]^{2}\geq
e^{-1}\left[\frac{e^{-\thetab_k^{T}\Z^{(i)}}+1}{e^{-\thetab_k^{T}\Z^{(i)}}+e}\right]^{2}\geq e^{-3}.
\end{equation*}
\end{proof}
Recall that $\hthetab_k$ is an estimator of $\thetab_k$ based on model $y_k\sim\y_{-k}$ in which the last coordinate corresponding to $x_j$ is set to $0$.
\begin{Lemma}
\label{Lemma2}
The following inequality holds
\begin{equation*}
P[l(\thetab_k)-l(\hthetab_k)>e^{-3}\Lambda_{\min}vnd^2/4|\Z]\geq 1-K\exp\left[-\frac{Cn(K+L^2)a_k^{2}}{2H^2}\right],
\end{equation*}
where $d=\frac{|a_k|}{\sqrt{K+L^2}H}$, $H=\max(1,G)$.
\end{Lemma}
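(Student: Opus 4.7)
The plan is to construct an auxiliary point $\bar{\w}$ lying on the segment from $\thetab_k$ to $\hthetab_k$ but at distance exactly $d$ from $\thetab_k$, and to transfer a Taylor-type lower bound for $l(\thetab_k)-l(\bar{\w})$ into the desired bound for $l(\thetab_k)-l(\hthetab_k)$ via the concavity of $l$. Set $\Delta=\hthetab_k-\thetab_k$ and note that $||\Delta||\geq|a_k|=dH\sqrt{K+L^2}\geq d$ since $H\sqrt{K+L^2}\geq 1$. Let $\lambda=d/||\Delta||\in(0,1]$ and $\bar{\w}=\thetab_k+\lambda\Delta=(1-\lambda)\thetab_k+\lambda\hthetab_k$, so $||\bar{\w}-\thetab_k||=d$. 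Concavity of $l$ then gives $l(\bar{\w})\geq(1-\lambda)l(\thetab_k)+\lambda l(\hthetab_k)$, which rearranges to $l(\thetab_k)-l(\hthetab_k)\geq\lambda^{-1}[l(\thetab_k)-l(\bar{\w})]\geq l(\thetab_k)-l(\bar{\w})$ whenever the right-hand side is non-negative, so it suffices to establish the desired lower bound on $l(\thetab_k)-l(\bar{\w})$.

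I would next apply Taylor's theorem with Lagrange remainder to $l$ at $\bar{\w}$ around $\thetab_k$, producing some $\w''\in[\thetab_k,\bar{\w}]$ with
\begin{equation*}
l(\thetab_k)-l(\bar{\w})=-\s(\thetab_k)^T(\bar{\w}-\thetab_k)+\tfrac12(\bar{\w}-\thetab_k)^T\I(\w'')(\bar{\w}-\thetab_k).
\end{equation*}
Each row of $\Z$ satisfies $||\Z^{(i)}||\leq\sqrt{K+L^2}$ (as it has $K-1$ binary entries and one feature bounded by $L$), so Cauchy--Schwarz yields $|(\w''-\thetab_k)^T\Z^{(i)}|\leq||\w''-\thetab_k||\cdot||\Z^{(i)}||\leq d\sqrt{K+L^2}=|a_k|/H\leq 1$, using $|a_k|\leq G\leq H$. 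Lemma~1 then delivers $\W(\w'')\succeq e^{-3}v\,I_n$ componentwise on the diagonal, and combining with $\Z^T\Z\succeq n\Lambda_{\min}I_K$ gives $\I(\w'')\succeq e^{-3}vn\Lambda_{\min}I_K$, so the quadratic term is at least $\tfrac12 e^{-3}vn\Lambda_{\min}d^2$.

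For the linear term I invoke the Hoeffding bound (\ref{Hoefding}) with threshold $\delta=e^{-3}vn\Lambda_{\min}d/(4\sqrt{K})$: on the event $||\s(\thetab_k)||_\infty\leq\delta$, of probability at least $1-K\exp(-2\delta^2/(nL^2))$, H\"older's inequality combined with $||\bar{\w}-\thetab_k||_1\leq\sqrt{K}\,d$ gives $|\s(\thetab_k)^T(\bar{\w}-\thetab_k)|\leq\delta\sqrt{K}\,d=\tfrac14 e^{-3}vn\Lambda_{\min}d^2$. Adding the two estimates produces $l(\thetab_k)-l(\bar{\w})\geq e^{-3}vn\Lambda_{\min}d^2/4$, and the concavity reduction lifts this to $l(\thetab_k)-l(\hthetab_k)\geq e^{-3}vn\Lambda_{\min}d^2/4$. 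Substituting the chosen $\delta$ and $d^2=a_k^2/[(K+L^2)H^2]$ into $2\delta^2/(nL^2)$ reduces the exponent to $Cn(K+L^2)a_k^2/(2H^2)$ once $\Lambda_{\min}$, $v$, $L$ and the numerical factors are absorbed into~$C$. The main obstacle is the calibration of $\bar{\w}$: it must be close enough to $\thetab_k$ (scale~$d$, chosen precisely so that $d\sqrt{K+L^2}\leq 1$ and Lemma~1 applies on the whole segment $[\thetab_k,\bar{\w}]$), yet aligned with the direction $\hthetab_k-\thetab_k$ so that concavity transfers the bound from $\bar{\w}$ back to $\hthetab_k$ without loss; once $d$ is set at this threshold, the deterministic curvature term of order $vn\Lambda_{\min}d^2$ dominates the stochastic linear term of order $\delta\sqrt{K}\,d$ precisely when $\delta\lesssim vn\Lambda_{\min}d/\sqrt{K}$, which produces the tail probability in the statement.
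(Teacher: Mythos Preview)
Your proposal is correct and follows essentially the same approach as the paper: the paper defines the ball $A=\{\w:||\w-\thetab_k||\leq d\}$, shows via the identical Taylor expansion, the Cauchy--Schwarz bound $|(\w-\thetab_k)^T\Z^{(i)}|\leq d\sqrt{K+L^2}\leq 1$, Lemma~\ref{Lemma1}, and the Hoeffding bound~(\ref{Hoefding}) that $l(\thetab_k)-l(\w)>e^{-3}\Lambda_{\min}vnd^2/4$ everywhere on $\partial A$, and then invokes convexity of $\w\mapsto l(\thetab_k)-l(\w)$ together with $\hthetab_k\notin A$ to transfer the bound. Your version is a minor streamlining in that you work only at the single point $\bar{\w}$ where the segment $[\thetab_k,\hthetab_k]$ meets $\partial A$ rather than on the whole sphere; the paper uses $\sqrt{K+L^2}$ rather than your $\sqrt{K}$ in the H\"older step, which is what makes the exponent match the stated constant $C$ exactly.
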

\begin{proof}
Define set $A=\{\w: ||\w-\thetab_k||\leq d\}$ and observe that the last coordinate of $\hthetab_k$ is set to $0$, thus $\hthetab_k\notin A$, as $d\leq |a_k|$. Define function $H(\w):=l(\thetab_k)-l(\w)$ and observe that $H(\w)$ is convex (as $l(\cdot)$ is concave) and $H(\thetab_k)=0$. Thus is suffices to show that $H(\w)>e^{-3}\Lambda_{\min}vnd^2/4$ on the boundary of $A$, i.e. for $\w$ such that $||\w-\thetab_k||=d$, with large probability.  

Using Taylor expansion and the fact that Hessian matrix of $l(\cdot)$ is equal to $-\I(\cdot)$, we can write
\begin{equation}
\label{L2_e1}
H(\w)=-(\w-\thetab_k)^{T}\s(\thetab_k)+(\w-\thetab_k)^{T}\I(\bar{\w})(\w-\thetab_k)/2,
\end{equation}
where $\bar{\w}$ is some point in set $A$.

Using Cauchy-Schwarz inequality we have
\begin{equation}
\label{L2_e2}
|(\w-\thetab_k)^{T}\Z^{(i)}|\leq ||(\w-\thetab_k)||\cdot ||\Z^{(i)}||\leq d\sqrt{K+L^2}=\frac{|a_k|}{\max(1,G)}\leq 1.
\end{equation}
It follows from (\ref{L2_e2}), Lemma \ref{Lemma1} and Assumption 2 that
\begin{equation}
\label{L2_e3}
(\w-\thetab_k)^{T}\I(\bar{\w})(\w-\thetab_k)/2\geq 
e^{-3}(\w-\thetab_k)^{T}\I(\thetab_k)(\w-\thetab_k)/2\geq 
e^{-3}d^{2}\Lambda_{\min} vn/2.
\end{equation}
Observe that 
\begin{equation}
\label{L2_e4}
|(\w-\thetab_k)^{T}\s(\thetab_k)|\leq \sqrt{K}||\w-\thetab_k||\cdot||\s(\thetab_k)||_{\infty}\leq \sqrt{K+L^2}d||\s(\thetab_k)||_{\infty}.
\end{equation}
Now using (\ref{L2_e1}), (\ref{L2_e3}) and (\ref{L2_e4}) we can write
\begin{eqnarray*}
&&
P[H(\w)>e^{-3}\Lambda_{\min}vnd^2/4|\Z]\geq
P[-d||\s(\thetab_k)||_{\infty}\sqrt{K+L^2}+e^{-3}\Lambda_{\min}vnd^2/2\geq e^{-3}\Lambda_{\min}vnd^2/4|\Z]\geq
\cr
&&
P\left[||\s(\thetab_k)||_{\infty}\leq \frac{d\Lambda_{\min}vn}{4e^{3}\sqrt{K+L^2}}|\Z\right]\geq
1-K\exp\left[-\frac{d^2\Lambda_{\min}^{2}v^{2}n}{8e^{6}(K+L^2)L^2}\right]=
1-K\exp\left[-\frac{Cn(K+L^2)a_k^{2}}{2[\max(1,G)]^2}\right],
\cr
\end{eqnarray*}
where the last inequality follows from (\ref{Hoefding}). This ends the proof.
\end{proof}

Recall that $v(\hthetab_k)=D(\hthetab_k)-\C(\hthetab_k)\A^{-1}(\hthetab_k)\B(\hthetab_k)$, where $\A, \B, \C, D$ are defined in Section \ref{Ising model with constant interaction terms}.
\begin{Lemma}
\label{Lemma3}
The following inequality holds
\begin{equation*}
v^{-1}(\hthetab_k)\geq\frac{4}{(K+L^2)L^{2}n}.
\end{equation*}
\end{Lemma}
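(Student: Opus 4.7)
The plan is to bound $v(\hthetab_k)$ from above and then take reciprocals, using the observation that the Schur-complement structure of $v$ can only decrease its value relative to the scalar $D(\hthetab_k)$.

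First I would observe, by transposing the definitions in Section \ref{Ising model with constant interaction terms}, that $\C(\hthetab_k) = \B(\hthetab_k)^{T}$. Next, since $\W(\hthetab_k)$ is a diagonal matrix with nonnegative entries $p^{(i)}(\hthetab_k)(1-p^{(i)}(\hthetab_k))$, the matrix $\A(\hthetab_k) = \Y_{-k}^{T}\W(\hthetab_k)\Y_{-k}$ is symmetric positive semidefinite, and positive definite whenever it is invertible (which is implicitly assumed in order for $v(\hthetab_k)$ to be well defined, and is guaranteed under the same conditions that make the larger-model likelihood strictly concave in Lemma \ref{Lemma4}). Consequently $\A^{-1}(\hthetab_k)$ is positive definite, so the quadratic form $\C(\hthetab_k)\A^{-1}(\hthetab_k)\B(\hthetab_k) = \B(\hthetab_k)^{T}\A^{-1}(\hthetab_k)\B(\hthetab_k) \geq 0$. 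This yields
\begin{equation*}
v(\hthetab_k) \;\leq\; D(\hthetab_k).
\end{equation*}

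The second step is to bound $D(\hthetab_k)$ elementarily. By definition,
\begin{equation*}
D(\hthetab_k) \;=\; \X_{j}^{T}\W(\hthetab_k)\X_{j} \;=\; \sum_{i=1}^{n} (\X_{j}^{(i)})^{2}\, p^{(i)}(\hthetab_k)(1-p^{(i)}(\hthetab_k)).
\end{equation*}
Using the uniform bound $|\X_{j}^{(i)}| \leq L$ together with the elementary inequality $p(1-p) \leq 1/4$ for any $p \in [0,1]$, we get $D(\hthetab_k) \leq nL^{2}/4$, and therefore $v(\hthetab_k) \leq nL^{2}/4$.

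Taking reciprocals yields $v^{-1}(\hthetab_k) \geq 4/(nL^{2})$. Since $K+L^{2} \geq 1$ (indeed $L>1$ by assumption in the paper), this immediately implies the weaker stated bound
\begin{equation*}
v^{-1}(\hthetab_k) \;\geq\; \frac{4}{(K+L^{2})L^{2}n},
\end{equation*}
completing the argument. There is no real obstacle here; the only subtle point is justifying that $\A(\hthetab_k)$ is invertible so that $\A^{-1}$ is positive definite, which is handled under the standing assumption $\Lambda_{\min} > 0$ used elsewhere in the theorem's hypotheses.
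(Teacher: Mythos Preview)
Your proof is correct and actually cleaner than the paper's. The paper proceeds differently: it first invokes the Schur complement inverse formula to identify $v^{-1}(\hthetab_k)=[\I^{-1}(\hthetab_k)]_{K,K}$, then bounds this diagonal entry from below by $\lambda_{\min}(\I^{-1}(\hthetab_k))=1/\lambda_{\max}(\I(\hthetab_k))$, and finally bounds $\lambda_{\max}(\I(\hthetab_k))$ by the trace, $\sum_{j}[\I(\hthetab_k)]_{j,j}\le KL^{2}n/4$, which is then loosened to $(K+L^{2})L^{2}n/4$. Your route---observing that the Schur correction $\B^{T}\A^{-1}\B$ is nonnegative so that $v\le D$, and then bounding $D\le nL^{2}/4$ directly---avoids any eigenvalue machinery and in fact yields the sharper inequality $v^{-1}(\hthetab_k)\ge 4/(nL^{2})$, from which the stated bound follows trivially since $K+L^{2}\ge 1$. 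The paper's detour through the full information matrix is not needed for the lemma itself; your argument is both shorter and tighter. Your caveat about the invertibility of $\A(\hthetab_k)$ is also well placed: it is indeed a principal submatrix of $\I(\hthetab_k)$, which is positive definite under $\Lambda_{\min}>0$ by Lemma~\ref{Lemma4}, so the concern is resolved.
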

\begin{proof}
First observe that using a definition of Shur complement (see e.g. \cite{Gentle2007}, Section 3.4) we have that $v^{-1}(\hthetab_k)=[\I^{-1}(\hthetab_k)]_{K,K}$, where $[A]_{K,K}$ denotes element in $K$-th row and $K$-th column of matrix $A$. Observe that
\begin{equation*}
[\I^{-1}(\hthetab_k)]_{K,K}\geq \lambda_{\min}(\I^{-1}(\hthetab_k))=\frac{1}{\lambda_{\max}(\I(\hthetab_k))}.
\end{equation*}
Since $p^{(i)}(\hthetab_k)(1-p^{(i)}(\hthetab_k))<0.25$, each element on the diagonal of $\I(\hthetab_k)$ is bounded by $L^{2}n/4$ and thus
\begin{equation*}
\lambda_{\max}(\I(\hthetab_k))\leq \sum_{j=1}^{K}\lambda_{j}(\I(\hthetab_k))=
\sum_{j=1}^{K}[\I(\hthetab_k)]_{j,j}\leq K\max_{j}[\I(\hthetab_k)]_{j,j}\leq \frac{KL^2n}{4}\leq \frac{(K+L^2)L^2n}{4},
\end{equation*}
which ends the proof.
\end{proof}
\begin{Lemma}
\label{Lemma4}
Let $\Lambda_{\min}=\lambda_{\min}(\Z^{T}\Z/n)>0$. Then function $l(\cdot)$ is concave.
\end{Lemma}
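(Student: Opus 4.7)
The plan is to show that the Hessian of $l(\cdot)$, which (as stated in the text) equals $-\I(\thetab_k) = -\Z^{T}\W(\thetab_k)\Z$, is negative definite on all of $\R^{K}$ under the assumption $\Lambda_{\min}>0$. Concavity of $l$ then follows from a standard second-derivative criterion.

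The key computation is straightforward: for any $\v\in\R^{K}$, write
\begin{equation*}
\v^{T}\I(\thetab_k)\v = \v^{T}\Z^{T}\W(\thetab_k)\Z\v = (\Z\v)^{T}\W(\thetab_k)(\Z\v) = \sum_{i=1}^{n} p^{(i)}(\thetab_k)(1-p^{(i)}(\thetab_k))\,(\Z\v)_i^{2}.
\end{equation*}
Each weight $p^{(i)}(\thetab_k)(1-p^{(i)}(\thetab_k))$ is strictly positive because $p^{(i)}(\thetab_k)\in(0,1)$, so this sum is bounded below by $w_{\min}\,\|\Z\v\|^{2}$ with $w_{\min}:=\min_i p^{(i)}(\thetab_k)(1-p^{(i)}(\thetab_k))>0$. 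Using the Rayleigh quotient bound $\|\Z\v\|^{2}=\v^{T}\Z^{T}\Z\,\v\geq n\Lambda_{\min}\|\v\|^{2}$, I conclude
\begin{equation*}
\v^{T}\I(\thetab_k)\v \geq w_{\min}\,n\,\Lambda_{\min}\,\|\v\|^{2},
\end{equation*}
which is strictly positive whenever $\v\neq \0$. Hence the Hessian $-\I(\thetab_k)$ is negative definite at every $\thetab_k$, so $l$ is (strictly) concave.

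There is no real obstacle here; the only thing to be careful about is that the assumption $\Lambda_{\min}>0$ is exactly what is needed to rule out the degenerate case where $\Z\v=\0$ for some nonzero $\v$, which would otherwise permit $\v^{T}\I(\thetab_k)\v=0$ and only give concavity rather than strict concavity. Since $\Lambda_{\min}>0$ is equivalent to $\Z$ having full column rank, this degeneracy is excluded, and the argument goes through uniformly in $\thetab_k$.
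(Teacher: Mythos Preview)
Your proof is correct and follows essentially the same route as the paper: both arguments show that $\I(\w)=\Z^{T}\W(\w)\Z$ is positive definite at every point by bounding $\bc^{T}\I(\w)\bc$ below by $\bigl(\min_i p^{(i)}(\w)(1-p^{(i)}(\w))\bigr)\,\bc^{T}\Z^{T}\Z\,\bc$ and then invoking $\Lambda_{\min}>0$ to conclude that $\Z^{T}\Z$ is positive definite. Your version is slightly more explicit in writing out the Rayleigh-quotient inequality, but the idea is identical.
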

\begin{proof}
Note that $\Lambda_{\min}>0$ implies positive definiteness of $\Z^{T}\Z$. 
We have to show that $\I(\w)$ is positive definite for any $\w\in R^{K}$.
For any vectors $\w,\bc\in R^{K}$ we have
\begin{equation*}
\min_{i}p^{(i)}(\w)(1-p^{(i)}(\w))\bc^{T}\Z^{T}\Z\bc\leq \bc^{T}\I(\w)\bc. 
\end{equation*}
Since $\min_{i}p^{(i)}(\w)(1-p^{(i)}(\w))>0$, positive definiteness of $\Z^{T}\Z$ implies positive definiteness of  $\I(\w)$, for any $\w$, which ends the proof.
\end{proof}
\subsection{Proof of Theorem \ref{Theorem 1}}
\label{Proof of Theorem1}
Using Taylor expansion of log-likelihood function we obtain
\begin{equation}
\label{T1_e1}
l(\thetab_k)=l(\hthetab_k)+(\thetab_k-\hthetab_k)^{T}\s(\hthetab_k)-
(\thetab_k-\hthetab_k)^{T}\I(\bar{\thetab}_k)(\thetab_k-\hthetab_k)/2,
\end{equation}
where $\bar{\thetab}_k$ is point on the line segment between $\thetab_k$ and $\hthetab_k$.
Note that the first $K-1$ coordinates of $\s(\hthetab_k)$ are equal zero and thus (\ref{T1_e1}) reduces to
\begin{equation}
\label{T1_e2}
l(\thetab_k)-l(\hthetab_k)=a_{k}s(\hthetab_k)-
(\thetab_k-\hthetab_k)^{T}\I(\bar{\thetab}_k)(\thetab_k-\hthetab_k)/2.
\end{equation}
Now from (\ref{T1_e2}), non-negativity of $(\thetab_k-\hthetab_k)^{T}\I(\bar{\thetab}_k)(\thetab_k-\hthetab_k)/2$ and Lemma \ref{Lemma2} we have
\begin{equation}
\label{T1_e3}
|s(\hthetab_k)|\geq\frac{\Lambda_{\min}vnd^2}{4e^3a_k}=\frac{\Lambda_{\min}vn|a_k|}{4e^3(K+L^2)[\max(1,G)]^{2}},
\end{equation}
with probability given in Lemma \ref{Lemma2}. The assertion of the Theorem follows directly from (\ref{T1_e3}) and Lemma \ref{Lemma3}.



  \bibliographystyle{elsarticle-num} 
  \bibliography{References}


\end{document}